\documentclass[a4paper,thmsb,11pt]{article}
\usepackage{color}
\usepackage{geometry}
\geometry{left=2.5cm,right=2.5cm,top=2.5cm,bottom=2.5cm}
\linespread{1.1}
\usepackage{geometry}
\usepackage{amsmath}
\usepackage{amssymb}
\usepackage{times}
\usepackage{mathrsfs}
\usepackage{graphicx}
\usepackage{animate}
\usepackage{algorithm}
\usepackage{algorithmicx}
\usepackage{algpseudocode}
\usepackage{tikz}
\usetikzlibrary{chains, positioning}
\usepackage[colorlinks,linkcolor=blue]{hyperref}
\newtheorem{theorem}{Theorem}

\newtheorem{problem}{Problem}
\newtheorem{definition}{Definition}
\newtheorem{corollary}{Corollary}
\newtheorem{lemma}{Lemma}

\begin{document}
%
\title{\textbf{Reachable Set Computation and Safety Verification for	Neural Networks with ReLU Activations}}

\author{Weiming~Xiang \footnotemark[1],~~Hoang-Dung Tran \footnotemark[1],~~and~~Taylor T. Johnson \footnotemark[1]}

\renewcommand{\thefootnote}{\fnsymbol{footnote}}
	
\footnotetext[1]{Authors are with the Department of Electrical Engineering and Computer Science, Vanderbilt University, Nashville, Tennessee 		37212, USA. Email: xiangwming@gmail.com (Weiming Xiang); Hoang-Dung Tran (trhoangdung@gmail.com); taylor.johnson@gmail.com (Taylor T. Johnson).}

\maketitle

\begin{abstract}
\boldmath
	Neural networks have been widely used to solve complex real-world problems. Due to the complicate, nonlinear, non-convex nature of neural networks, formal safety guarantees for the output behaviors of neural networks will be crucial for their applications in safety-critical systems. 
	In this paper,  the output reachable set computation and safety verification problems for a class of neural networks consisting of Rectified Linear Unit (ReLU) activation functions are addressed. A layer-by-layer approach is developed to compute output reachable set. The computation is formulated in the form of a set of manipulations for a union of polyhedra, which can be efficiently applied with the aid of polyhedron computation tools. Based on the output reachable set computation results, the safety verification for a ReLU neural network can be performed by checking the intersections of unsafe regions and output reachable set described by a union of polyhedra. A numerical example of a randomly generated ReLU neural network is provided to show the effectiveness of the approach developed in this paper.

\end{abstract}

\section{Introduction}
Artificial neural networks have been widely used in machine learning systems. Applications include adaptive control \cite{wang2016combined,wu2014exponential,he2016adaptive,sun2017adaptive,ge1999adaptive,hunt1992neural}, pattern recognition \cite{schmidhuber2015deep,lawrence1997face}, game playing \cite{silver2016mastering}, autonomous vehicles \cite{bojarski2016end} ,and many others. Though neural networks have been showing effectiveness and powerful ability in resolving complex problems, they are confined to systems which comply only to the lowest safety integrity levels since, in most of time, a neural network is viewed as a \emph{black box} without effective  methods to assure safety specifications for its outputs. Neural networks are trained over a finite number of input and output data, and are expected to be able to generalize to produce desirable outputs for given inputs even including previously unseen inputs. However, in many practical applications, the number of inputs is essentially infinite, this means it is impossible to check all the possible inputs only by performing experiments and moreover, it has been observed that neural networks can react in unexpected and incorrect ways to even slight perturbations of their inputs \cite{szegedy2013intriguing}, which could result in unsafe systems. Hence, methods that are able to provide formal guarantees are in a great demand for verifying specifications or properties of neural networks.  Verifying neural networks is a hard problem, even simple properties about them have been proven NP-complete
problems \cite{katz2017reluplex}. The difficulties mainly come from the presence of activation functions and the complex structures, making neural networks large-scale, nonlinear, non-convex and thus incomprehensible to humans. Until now, only few results have been reported for verifying neural networks. The verification for feed-forward
multi-layer neural networks is investigated based on \emph{Satisfiability Modulo Theory} (SMT) in \cite{huang2016safety,pulina2012challenging}. In \cite{pulina2010abstraction} an abstraction-refinement approach is proposed for verification of
specific networks known as \emph{Multi-Layer Perceptrons} (MLPs). In \cite{katz2017reluplex}, a specific kind of activation functions called \emph{Rectified Linear Unit} (ReLU) is considered for verification of neural networks. A simulation-based approach is developed in \cite{xiang2017output}, which turns the reachable set estimation problem into a neural network maximal sensitivity computation problem that is described in terms of a chain of convex optimization problems. Additionally, some recent reachable set estimation results are reported for neural networks \cite{xu2017reachable,zuo2014non,thuan2016reachable}, these results that are based on Lyapunov functions certainly have potentials to be further extended to safety verification.

A neural network is comprised of a set of layers of neurons, with a linear combination of values
from nodes in the preceding layer and applying an activation function to
the result. These activation functions are usually nonlinear functions. In this work, we are going to focus on ReLU activation functions  \cite{nair2010rectified}, which is widely used in many neural networks \cite{glorot2011deep,jarrett2009best,maas2013rectifier,huang2016safety}. A ReLU function is basically piecewise linear. It returns zero when the node is with a negative value, implying the node is inactive. On the other hand, when the node is active with a positive value, the ReLU returns the value unchanged.  This piecewise linearity allows ReLU neural networks have several advantages such as a faster training process and avoiding gradient vanishing problem. As to output reachable set computation and verification problems addressed in this paper, this piecewise linearity will also play a fundamental role in the computing procedure.

The main contribution of this work is to develop an approach for computing output reachable set  of ReLU neural networks and further applying to safety verification problems. Under the assumption that the initial set is described by a union of ployhedra, the output reachable set is computed layer-by-layer via a set of manipulations of ployhedra. For a ReLU function, there are three cases from the view of the output vectors:
\begin{itemize}
	\item \emph{Case 1}: All the elements in the input vector are positive, thus the output is exactly equivalent to the input;
	\item \emph{Case 2}:  All the elements are non-positive so that the ReLU function produces a zero vector according to the definition of ReLU function;
	\item \emph{Case 3}: The input vector has both positive and non-positive elements. This is a much more intricate case which will be proved later that the outputs belong to a union of polyhedra, that is essentially non-convex.
\end{itemize}

These above three caese are able to fully characterize the output behaviors of a ReLU function and form the basic idea of computing the output reachable set for neural networks comprised of ReLU neurons. With the above classifications and a complete reachability analysis for ReLU functions, the output reachable set of a ReLU layer can be obtained case by case and in the expression of a union of polyhedra.  Then,  the approach is generalized from a single layer to a neural network consisting of multiple layers for output reachable set computation of neural networks.  Finally, the safety verification can be performed by checking if there is non-empty intersection between the output reachable set and unsafe regions. Since the output reachable set computed in this work is an exact one with respect to an  input set, the verification results are sound for both safe and unsafe conclusions. The main benefit of our approach is that all the computation processes are formulated in terms of operations on polyhedra, which can be efficiently solved by existing tool for manipulations on polyhedra.   

The remainder of the paper is organized as follows. The preliminaries for ReLU neural networks and problem formulations are given in Section II. The output reachability analysis for ReLU functions is studied in Section III. The main results, reachable set computation and verification for ReLU neural networks, are presented in Section IV. A numerical example is given in Section V to illustrate our approach, and we conclude in Section VI.

\emph{Notations:}  $\mathbb{R}$ denotes the field of real numbers, $\mathbb{R}^n$  stands for the vector space of all $n$-tuples of real numbers, $\mathbb{R}^{n\times n}$  is the space of $n\times n$  matrices with real entries.  $\mathrm{diag}\{\cdots\}$ denotes a block-diagonal matrix. $\left\|  \mathbf{x}  \right\|_\infty$  stands for infinity norm for vector $\mathbf{x} \in \mathbb{R}^{n}$ defined as $\left\|  \mathbf{x}  \right\|_\infty = \max\nolimits_{i=1,\ldots,n}{\left|x_i\right|}$.  $\mathbf{A}^{\top}$ denotes the transpose of matrix $\mathbf{A}$. 

\section{Preliminaries and Problem Formulation}
This section presents the mathematical model of neural networks with ReLU activations considered in this paper and formulates the problems to be studied.
\subsection{Neural Networks with ReLU Activations}
A neural network consists of a number of interconnected neurons. Each neuron is a simple processing element that responds to the weighted inputs it received from other neurons. In this paper, we consider the most popular and general feedforward neural networks called the Multi-Layer Perceptron (MLP). Generally, an MLP consists of three typical classes of layers: An input layer, that
serves to pass the input vector to the network, hidden layers of computation neurons, and
an output layer composed of at least a computation neuron to produce the output vector.

The action of a neuron depends on its activation function, which is described as
\begin{align} \label{neuron}
y_i = f\left(\sum\nolimits_{j=1}^{n}\omega_{ij} x_j + \theta_i\right)
\end{align}
where $x_j$ is the $j$th input of the $i$th neuron, $\omega_{ij}$ is the weight from the $j$th input to the $i$th neuron, $\theta_i$ is called the bias of the $i$th neuron, $y_i$ is the output of the $i$th neuron, $f(\cdot)$ is the activation function. The activation function is generally a nonlinear function  describing the reaction of $i$th neuron with inputs $x_j(t)$, $j=1,\cdots,n$. Typical activation functions include rectified linear unit, logistic, tanh, exponential linear unit, linear functions, etc.

An MLP has multiple layers,  each layer $\ell$, $1 \le \ell \le L $, has $n^{[\ell]}$ neurons.  In particular, layer $\ell =0$ is used to denote the input layer and $n^{[0]}$ stands for the number of inputs in the rest of this paper and thus $n^{[L]}$ stands for the last layer, that is the output layer. For a neuron $i$, $1 \le i \le n^{[\ell]}$ in layer $\ell$, the corresponding input vector is denoted by $\mathbf{x}^{[\ell]} \in \mathbb{R}^{n^{[\ell-1]}}$ and the weight matrix is
\begin{equation*}
\mathbf{W}^{[\ell]} = \left[\boldsymbol{\omega}_{1}^{[\ell]},\ldots,\boldsymbol{\omega}_{n^{[\ell]}}^{[\ell]}\right]^{\top}
\end{equation*}
where $\boldsymbol{\omega}_{i}^{[\ell]} \in \mathbb{R}^{n^{[\ell-1]}}$ is the weight vector. The bias vector for layer $\ell$ is
\begin{equation*} \boldsymbol{\theta}^{[\ell]}=\left[\theta_1^{[\ell]},\ldots,\theta_{n^{[\ell]}}^{[\ell]}\right]^{\top}.
\end{equation*}

The output vector of layer $\ell$ can be expressed as
\begin{equation*}
\mathbf{y}^{[\ell]}=f_{\ell}(\mathbf{W}^{[\ell]}\mathbf{x}^{[\ell]}+\boldsymbol{\theta}^{[\ell]})
\end{equation*}
where $f_{\ell}(\cdot)$ is the activation function for layer $\ell$.

For an MLP, the output of $\ell-1$ layer is the input of $\ell$ layer, and the mapping from the input of input layer $\mathbf{x}^{[0]} \in \mathbb{R}^{n^{[0]}}$ to the output of output layer $\mathbf{y}^{[L]} \in \mathbb{R}^{n^{[L]}}$ stands for the input-output relation of the MLP, denoted by
\begin{equation}\label{NN}
\mathbf{y}^{[L]} = F (\mathbf{x}^{[0]})
\end{equation}
where $F(\cdot) \triangleq f_L  \circ f_{L - 1}  \circ  \cdots  \circ f_1(\cdot) $.

In this work, we aim at a class of activation functions called \emph{Rectified Linear Unit} (ReLU), which is expressed as below:
\begin{equation} \label{relu}
f(x) = \mathrm{ReLU}(x) =
\left\{ {\begin{array}{*{20}c}
	0 & {x \le 0}  \\
	x & {x > 0}  \\
	\end{array} } \right..
\end{equation}

Thus, the output of a neuron considered in (\ref{neuron}) can be rewritten as
\begin{equation}  \label{reluLayer}
y_i = \mathrm{ReLU}\left(\sum\nolimits_{j=1}^{n}\omega_{ij} x_j + \theta_i\right)
\end{equation}
and the corresponding output vector of layer $\ell$ becomes
\begin{equation} \label{ReLUNN}
\mathbf{y}^{[\ell]}=\mathrm{ReLU}(\mathbf{W}^{[\ell]}\mathbf{x}^{[\ell]}+\boldsymbol{\theta}^{[\ell]})
\end{equation}
in which the ReLU function is applied element-wise.

In the most of real applications, an MLP is usually viewed as a \emph{black box} to generate a desirable
output with respect to a given input. However, regarding property verifications such as the safety verification, it has been observed that even a well-trained neural network can react in unexpected and incorrect ways to even slight perturbations of their inputs, which could result in unsafe systems. Thus, the output reachable set estimation of an MLP, which is able to cover all possible values of outputs, is necessary for the safety verification of an MLP and draw a safe or unsafe conclusion for an MLP.

\subsection{Problem Formulation}

Given an input set $\mathcal{X}$, the reachable set of neural network (\ref{NN}) is stated by the following definition.
\begin{definition} \label{reach_set}
	Given a neural network in the form of (\ref{NN}) and input $\mathbf{x}^{[0]}$ belonging to a set $\mathcal{X}^{[0]}$, the output reachable set of (\ref{NN}) is defined by
	\begin{equation}\label{reachset}
	\mathcal{Y} \triangleq \left\{\mathbf{y}^{[L]}  \mid \mathbf{y}^{[L]} = F (\mathbf{x}^{[0]}),~\mathbf{x}^{[0]} \in \mathcal{X}^{[0]}\right\}.
	\end{equation}
\end{definition}

In our work, the input set is considered to be a union of $N$ polyhedra, that is expressed as $\mathcal{X}^{[0]} = \bigcup_{s =1}^{N_0}{\mathcal{X}_s^{[0]}}$, where $\mathcal{X}_{s}^{[0]}$, $s = 1,\ldots,N_0$, are described by
\begin{equation}
\mathcal{X}^{[0]}_s \triangleq\left\{ \mathbf{x}^{[0]} \mid  \mathbf{A}_s^{[0]}\mathbf{x}^{[0]} \le \mathbf{b}_s^{[0]},~\mathbf{x} \in \mathbb{R}^{n^{[0]}}\right\},~s =1,\ldots,N_0.  \label{inputset}
\end{equation}

With respect to input set (\ref{inputset}),  the reachable set computation problem for neural network (\ref{NN}) with ReLU activations is given as below.

\begin{problem} \label{problem1}
	Given an input set $\mathcal{X}^{[0]}$ and a neural network in the form of (\ref{NN})  with ReLU activations described by (\ref{ReLUNN}), how to compute the reachable set $\mathcal{Y}$ defined by (\ref{reachset})?
\end{problem}

Then, we will focus on the safety verification for neural networks. The safety specification for output is expressed by a set defined in the output space, describing the safety requirement. For example,  in accordance to input set, the safety region can be also considered as a union of polyhedra defined in output space as $\mathcal{S} = \bigcup_{m =1}^{M}{\mathcal{S}_m}$, where $\mathcal{S}_{m}$, $m = 1,\ldots,M$, are given by
\begin{equation}\label{safety_region}
\mathcal{S}_m\triangleq\left\{\mathbf{y}^{[L]}\mid \mathbf{C}_m \mathbf{y}^{[L]} \le \mathbf{d}_m,~\mathbf{y} \in \mathbb{R}^{n^{[L]}}\right\},~m = 1,\ldots,M.
\end{equation}

The safety region in the form of (\ref{safety_region}) formalizes the safety requirements for output $\mathbf{y}^{[L]}$.
If output $\mathbf{y}^{[L]}$ belongs to safety region $\mathcal{S}$, we say the neural network is safe, otherwise, it is called unsafe.

\begin{definition}
	Given a neural network in the form of (\ref{NN}) and safety region $\mathcal{S}$, the MLP is safe if and only if the following condition is satisfied:
	\begin{equation}\label{safety}
	\mathcal{Y} \cap \neg \mathcal{S} = \emptyset
	\end{equation}
	where $\neg$ is the symbol for logical negation and $\mathcal{Y}$ is the output reachable set of MLP defined by (\ref{reachset}).
\end{definition}

Therefore, the safety verification problem for MLP with ReLU activations can be stated as follows.
\begin{problem}\label{problem2}
	Given an input set $\mathcal{X}$ by (\ref{inputset}), a safety specification by (\ref{safety_region}), a neural network in the form of (\ref{NN}) with ReLU activation by (\ref{ReLUNN}), how to check if condition (\ref{safety}) is satisfied?
\end{problem}

The above two linked problems are the main concerns to be addressed in the rest of this paper. The crucial step is to find an efficient way to compute the output reachable set for a ReLU neural network with a given input set. In the next sections, the main results will be presented for the output reachable set computation  and safety verification for  ReLU neural networks.

\section{Ouput Reach Set Computation of ReLU Functions}
In this section, we consider the output reachable set of a single ReLU function $\mathbf{y} = \mathrm{ReLU}(\mathbf{x})$ with an input set $\mathcal{X} \subset \mathbb{R}^{n}$.  Before presenting the result, an indicator vector $\mathbf{p} = [p_0,\ldots,p_{n}]$, $p_i \in \{0,1\}$, is introduced for the following derivation. In the indicator vector $\mathbf{p}$,  the element $p_i$ is valuated as below:
\begin{equation*}
p_i  = \left\{ {\begin{array}{*{20}c}
	0 & {x_i \le 0}  \\
	1 & {x_i > 0}  \\	
	\end{array} } \right.,~i=1,\ldots,n.
\end{equation*}

Considering all the valuations of $p_i$ in  $\mathbf{p}$, there are $2^n$ possible valuations in total, which are indexed as
\begin{equation*}
\left[ {\begin{array}{*{20}c}
	{\mathbf{p}_0 }  \\
	{\mathbf{p}_1 }  \\
	\vdots   \\
	{\mathbf{p}_{2^n-1 } }  \\
	\end{array} } \right] = \left[ {\begin{array}{*{20}c}
	0 & 0 & 0 & 0  \\
	0 & 0 & 0 & 1  \\
	\vdots  &  \vdots  &  \vdots  &  \vdots   \\
	1 & 1 & 1 & 1  \\
	\end{array} } \right].
\end{equation*}

Furthermore, each indicator vector from $\mathbf{p}_0$ to $\mathbf{p}_{2^n-1}$ are diagonalized and denoted as 
\begin{equation*}
\left[ {\begin{array}{*{20}c}
	{\mathbf{P}_0 }  \\
	{\mathbf{P}_1 }  \\
	\vdots   \\
	{\mathbf{P}_{2^n-1 } }  \\
	\end{array} } \right] = \left[ {\begin{array}{*{20}c}
	\mathrm{diag}(\mathbf{p}_0)  \\
	\mathrm{diag}(\mathbf{p}_1)  \\
	\vdots    \\
	\mathrm{diag}(\mathbf{p}_{2^n-1})  \\
	\end{array} } \right].
\end{equation*}

Now, we are ready to compute the output reachable set of ReLU function $\mathbf{y} = \mathrm{ReLU}(\mathbf{x})$  with an input $\mathbf{x} \in \mathcal{X} \subset \mathbb{R}^{n}$. For the input set, we have three cases listed below:
\begin{itemize}
	\item \emph{Case 1}:  All the elements are positive by the input in $\mathcal{X}$, that implies
	\begin{equation*}\label{eq:case1_x}
	\mathbf{x} \in \mathcal{X}^{+} = \left\{\mathbf{x} \mid \mathbf{x}  > 0,~\mathbf{x} \in \mathcal{X}\right\}.
	\end{equation*}
	
	According to the definition of ReLU function, the output set should be
	\begin{equation}\label{eq:Y+}
	\mathcal{Y}^{+} = \left\{\mathbf{y} \mid \mathbf{y} = \mathbf{x},~\mathbf{x} \in \mathcal{X}^{+}\right\}.
	\end{equation}
	\item \emph{Case 2}:  All the elements in the outputs are non-positive, which means
	\begin{equation*}
	\mathbf{x} \in \mathcal{X}^{-} = \left\{\mathbf{x} \mid \mathbf{x} \le \mathbf{0},~\mathbf{x} \in \mathcal{X}\right\}.
	\end{equation*}
	
	By the definition of ReLU, it directly leads to \begin{equation}\label{eq:Y-}
	\mathcal{Y}^{-}=\left\{\mathbf{y} \mid \mathbf{y}=\mathbf{0},~\mathbf{x} \in \mathcal{X}^-\right\}.
	\end{equation}
	
	\item \emph{Case 3}:  The outputs have both positive and non-positive elements, which corresponds to indicator vectors $\mathbf{p}_h$, $h=1,\ldots,2^{n}-2$.  Note that, for each $\mathbf{p}_h$, $h=1,\ldots,2^{n}-2$, the element $p_i = 0$ indicates $y_i=\mathrm{ReLU}(x_i)=  0$ due to $x_i \le 0$. With respect to each $\mathbf{p}_h$, $h=1,\ldots,2^{n}-2$ and noting  $\mathbf{x} = [x_1,\ldots,x_n] $, we define the set
	\begin{equation*}
	\mathcal{X}_{h}^{-} = \left\{\mathbf{x} \mid x_i \le 0 ,~\mathbf{x} \in \mathcal{X}\right\},~h=1,\ldots,2^{n}-2
	\end{equation*}
	where $i \in \left\{i \mid p_i =0,~ [p_1,\ldots,p_n] = \mathbf{p}_h\right\}$. In a compact form, it can be expressed as
	\begin{equation*}
	\mathcal{X}_{h}^{-} = \left\{\mathbf{x} \mid (\mathbf{I}-\mathbf{P}_h)\mathbf{x} \le 0 ,~\mathbf{x} \in \mathcal{X}\right\},~h=1,\ldots,2^{n}-2.
	\end{equation*}
	
	Due to  ReLU functions, when $x_i < 0$, it will be set to $0$, thus the output for $\mathbf{x} \in \mathcal{X}_{h}^{-}$ should be
	\begin{equation*}
	\mathbf{y}=\mathbf{P}_h\mathbf{x},~ \mathbf{x} \in \mathcal{X}_{h}^{-},~h=1,\ldots,2^{n}-2.
	\end{equation*}
	
	Again, due to ReLU functions, the final value should be non-negative, that is $\mathbf{y}\ge 0$, thus this additional  constraint has to be added for $\mathbf{x}$ to have
	\begin{equation*}
	\mathbf{x} \in \mathcal{X}_{h}^{+}=\left\{\mathbf{x} \mid \mathbf{P}_h\mathbf{x} \ge 0,~\mathbf{x} \in \mathcal{X}^-_{h}\right\},~h=1,\ldots,2^{n}-2.
	\end{equation*}
	
	As a result, the output reachable set is
	\begin{equation}\label{eq:Yh}
	\mathcal{Y}_{h} = \left\{\mathbf{y} \mid \mathbf{y} = \mathbf{x},~\mathbf{x} \in \mathcal{X}_{h}^{+} \right\},~h=1,\ldots,2^{n}-2.
	\end{equation}
\end{itemize}

An illustration for the above three cases is shown in Figure \ref{fig:ReLU} with two dimensional input space. In Figure \ref{fig:ReLU}, (a) is for the \emph{Case 1}, (b) is for the \emph{Case 2}, (c) and (d) are for the \emph{Case 3}.
Summarizing the three cases for a ReLU function $\mathbf{y} = \mathrm{ReLU}(\mathbf{x})$, the following proposition can be obtained.

\begin{figure}
	\begin{center}
		\includegraphics[width=11cm]{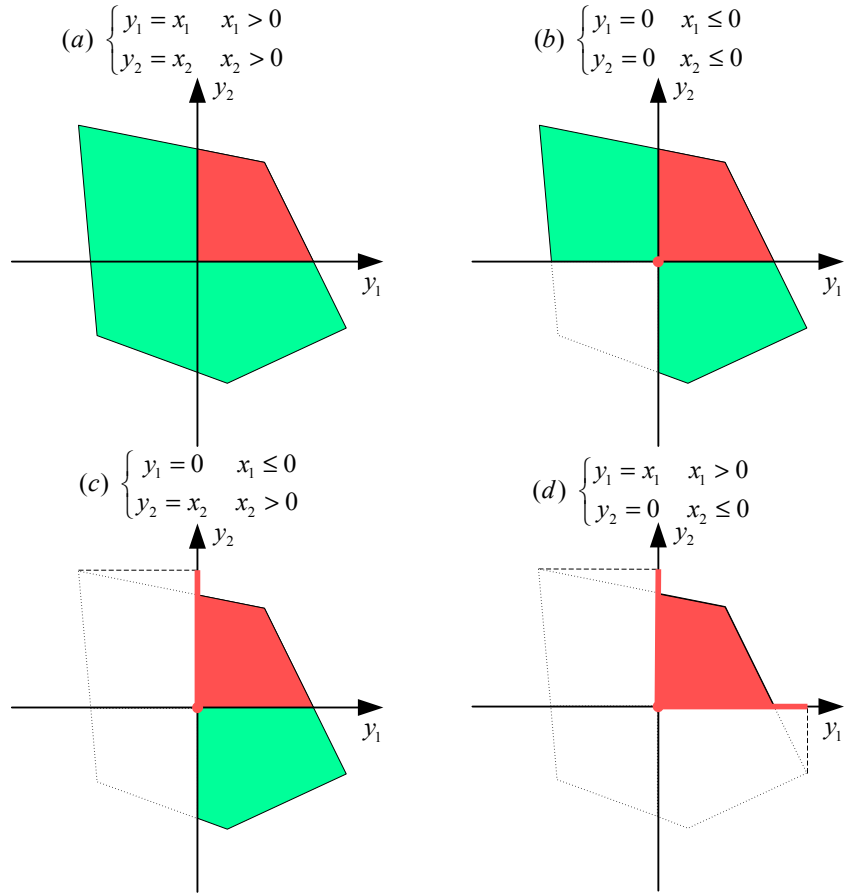}
		\caption{Visualization of ReLU function $\mathbf{y} = \mathrm{ReLU}(\mathbf{x})$, $x \in \mathcal{X} \triangleq \{\mathbf{x}\mid \mathbf{Ax} \le \mathbf{b},\mathbf{x} \in \mathbb{R}^2\}$. (a) is for the \emph{Case 1}, the red area is for $\mathcal{Y}^{+}$; (b) is for the \emph{Case 2} where the red spot denotes $\mathbf{y}= \mathbf{0}$; (c) and (d) are for the \emph{Case 3}, the red lines on the axis are $\mathcal{Y}_h$, $h=1,2$. The resulting output set in (d) is non-convex but expressed by a union of polyhedra. }
		\label{fig:ReLU}
	\end{center}
\end{figure}

\begin{theorem}\label{thm1}
	Given a ReLU function $\mathbf{y} = \mathrm{ReLU}(\mathbf{x})$ with an input set $\mathcal{X} \in \mathbb{R}^{n}$, its output reachable set $\mathcal{Y}$ is
	\begin{equation}
	\mathcal{Y} = \mathcal{Y}^{+} \cup \mathcal{Y}^{-} \cup \left(\bigcup\nolimits_{h=1}^{2^n-2}\mathcal{Y}_{h}\right)
	\end{equation}
	where $\mathcal{Y}^{+}$, $\mathcal{Y}^{-}$ and $\mathcal{Y}_{h}$ are defined by (\ref{eq:Y+}), (\ref{eq:Y-}) and (\ref{eq:Yh}), respectively.
\end{theorem}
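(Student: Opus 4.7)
The plan is to verify the set equality by establishing both inclusions through a case analysis on the sign pattern of each input vector. For any $\mathbf{x} \in \mathcal{X}$, define its sign pattern $\mathbf{p} = [p_1,\ldots,p_n]$ by $p_i = 1$ when $x_i > 0$ and $p_i = 0$ when $x_i \le 0$. This rule assigns to every $\mathbf{x}$ a unique indicator vector $\mathbf{p}_h$ with $h \in \{0,1,\ldots,2^n-1\}$, so the three cases introduced before the theorem are collectively exhaustive: $h = 2^n-1$ gives Case 1, $h = 0$ gives Case 2, and $h \in \{1,\ldots,2^n-2\}$ gives Case 3.

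For the inclusion $\mathcal{Y} \subseteq \mathcal{Y}^{+} \cup \mathcal{Y}^{-} \cup \bigcup_{h=1}^{2^n-2}\mathcal{Y}_h$, I would fix an arbitrary $\mathbf{x}\in\mathcal{X}$ and route $\mathrm{ReLU}(\mathbf{x})$ according to its sign pattern. If $\mathbf{p}(\mathbf{x}) = \mathbf{p}_{2^n-1}$, then $\mathbf{x} \in \mathcal{X}^{+}$ and the element-wise definition of ReLU in (\ref{relu}) yields $\mathrm{ReLU}(\mathbf{x}) = \mathbf{x} \in \mathcal{Y}^{+}$. If $\mathbf{p}(\mathbf{x}) = \mathbf{p}_0$, then $\mathbf{x} \in \mathcal{X}^{-}$ and $\mathrm{ReLU}(\mathbf{x}) = \mathbf{0} \in \mathcal{Y}^{-}$. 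Otherwise, for $1 \le h \le 2^n-2$, the coordinates with $p_i = 0$ satisfy $x_i \le 0$ (giving $(\mathbf{I} - \mathbf{P}_h)\mathbf{x} \le 0$) and the coordinates with $p_i = 1$ satisfy $x_i > 0$ (giving $\mathbf{P}_h\mathbf{x} \ge 0$), so $\mathbf{x} \in \mathcal{X}_h^{+}$, and $\mathrm{ReLU}(\mathbf{x}) = \mathbf{P}_h \mathbf{x} \in \mathcal{Y}_h$ as required.

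The reverse inclusion is a direct reading of the constructions in Section III: by (\ref{eq:Y+}), (\ref{eq:Y-}), and (\ref{eq:Yh}), each of $\mathcal{Y}^{+}$, $\mathcal{Y}^{-}$, and $\mathcal{Y}_h$ is the ReLU-image of a subset of $\mathcal{X}$, hence is contained in $\mathcal{Y}$. Taking the union of these containments gives $\mathcal{Y}^{+} \cup \mathcal{Y}^{-} \cup \bigcup_{h=1}^{2^n-2}\mathcal{Y}_h \subseteq \mathcal{Y}$.

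The one delicate point to handle is the treatment of coordinates equal to zero, since $\mathcal{X}^{+}$ uses the strict inequality $\mathbf{x} > 0$ while $\mathcal{X}^{-}$ and $\mathcal{X}_h^{-}$ use non-strict inequalities, producing mild overlap at the boundary. I expect this to be harmless rather than a true obstacle: on the overlap any zero coordinate is mapped to zero by ReLU irrespective of which case it is assigned to, so the two (or more) expressions that could represent the same $\mathrm{ReLU}(\mathbf{x})$ yield the same output and the union identity is preserved. No other step requires anything beyond the coordinate-wise definition of ReLU and elementary set algebra, so once the sign-pattern decomposition of $\mathcal{X}$ is in hand the theorem follows immediately.
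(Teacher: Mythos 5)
Your proposal is correct and follows essentially the same route as the paper's proof: a decomposition of $\mathcal{X}$ by the sign pattern of $\mathbf{x}$ into $\mathcal{X}^{+}$, $\mathcal{X}^{-}$ and the $\mathcal{X}_h$ pieces, followed by taking the union of the ReLU images of each piece; your version is in fact more careful, since you check both inclusions explicitly and address the overlap at zero coordinates, which the paper's one-line argument glosses over. The only point to flag is that your step ``$\mathrm{ReLU}(\mathbf{x}) = \mathbf{P}_h\mathbf{x} \in \mathcal{Y}_h$'' quietly uses the intended reading of (\ref{eq:Yh}) with $\mathbf{y} = \mathbf{P}_h\mathbf{x}$, whereas the displayed definition literally says $\mathbf{y} = \mathbf{x}$ (making $\mathcal{Y}_h = \mathcal{X}_h^{+}$); your interpretation is the one consistent with the ReLU semantics and with Figure \ref{fig:ReLU}, but it is worth noting that the membership claim does not follow from (\ref{eq:Yh}) as written.
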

\begin{proof}
	The proof can be straightforwardly obtained from the derivation of above three cases. Three cases completely characterize the behaviors of ReLU functions due to $\mathcal{X} = \mathcal{X}^{+} \cup \mathcal{X}^{-} \cup \left(\bigcup\nolimits_{h=1}^{2^n-2}\mathcal{X}_{h}^{-}\right)$. For the case of $\mathbf{x} \in \mathcal{X}^+$, it produces $\mathcal{Y^+}$, and $\mathbf{x} \in \mathcal{X}^-$ leads to $\mathcal{Y^-}$. As to $x \in \mathcal{X}_h^{-}$, $h=1,\ldots,2^n-2$, the output reachable set is $\bigcup\nolimits_{h=1}^{2^n-2}\mathcal{Y}_{h}$.   Thus, the output reachable set $\mathcal{Y}$ is the union of output sets of three cases, that is $	\mathcal{Y} = \mathcal{Y}^{+} \cup \mathcal{Y}^{-} \cup \left(\bigcup\nolimits_{h=1}^{2^n-2}\mathcal{Y}_{h}\right)$. 
\end{proof}

Theorem \ref{thm1} gives out a general result for output reachable set of a ReLU function, since there is no restriction imposed on input set $\mathcal{X}$. In the following, we consider the input set as a union of polyhedra described by $\mathcal{X}=\bigcup\nolimits_{s=1}^{N}\mathcal{X}_s$, where $\mathcal{X}_s$, $s \in \{1,\ldots,N\}$ are given as
\begin{equation}\label{inputset1}
\mathcal{X}_s \triangleq \left\{\mathbf{x}\mid \mathbf{A}_s\mathbf{x} \le \mathbf{b}_s,~\mathbf{x} \in \mathbb{R}^n\right\},~s =1,\ldots,N.
\end{equation}

Based on Theorem \ref{thm1}, the following result can be derived  for input sets described by a union of polyhedra.

\begin{theorem}\label{thm2}
	Given a ReLU function $\mathbf{y} = \mathrm{ReLU}(\mathbf{x})$ with an input set $\mathcal{X}=\bigcup\nolimits_{s=1}^{N}\mathcal{X}_s$ in which $\mathcal{X}_s$, $s = 1,\ldots,N$, is defined by (\ref{inputset1}), its output reachable set $\mathcal{Y}$ is
	\begin{equation}\label{thm2_1}
	\mathcal{Y} = \bigcup\nolimits_{s=1}^{N} \left(\mathcal{Y}^{+}_s \cup \mathcal{Y}^{-}_s \cup \bigcup\nolimits_{h=1}^{2^n-2}\mathcal{Y}_{h,s}\right)
	\end{equation}
	where  $\mathcal{Y}_{h,s}$, $h=1,\ldots,2^n-1$, $s =1,\ldots,N$, are as follows
	\begin{align*}
	\mathcal{Y}_s^{+} &= \left\{\mathbf{y} \mid \mathbf{A}_s \mathbf{y}\le \mathbf{b}_s \wedge \mathbf{y} > \mathbf{0} ,~\mathbf{y} \in \mathbb{R}^n \right\};
	\\
	\mathcal{Y}_s^ -  & = \left\{ {\begin{array}{*{20}c}
		{\{ \mathbf{0}\}, } & {\mathcal{X}_s^ -   \ne \emptyset }  \\
		\emptyset,  & {\mathcal{X}_s^ -  = \emptyset }  \\	
		\end{array} } \right.,
	\\
	\mathcal{X}_s^-& = \left\{\mathbf{x} \mid \left[ {\begin{array}{*{20}c}
		{\mathbf{A}_s }  \\
		{\mathbf{I} }  \\
		\end{array} } \right]\mathbf{x} \le \left[ {\begin{array}{*{20}c}
		{\mathbf{b}_s }  \\
		\mathbf{0}  \\
		\end{array} } \right],~\mathbf{x} \in \mathbb{R}^n\right\};
	\\
	\mathcal{Y}_{h,s} &=\left\{\mathbf{y} \mid \left[ {\begin{array}{*{20}c}
		{\mathbf{A}_s }  \\
		{\mathbf{I} - \mathbf{P}_h}  \\
		{ - \mathbf{P}_h }  \\
		\end{array} } \right]\mathbf{y} \le \left[ {\begin{array}{*{20}c}
		{\mathbf{b}_s }  \\
		\mathbf{0}  \\
		\mathbf{0}  \\	
		\end{array} } \right],~\mathbf{y} \in \mathbb{R}^n\right\}. \label{thm2_4}
	\end{align*}
\end{theorem}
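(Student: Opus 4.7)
The plan is to reduce Theorem~\ref{thm2} to Theorem~\ref{thm1} applied to each polyhedral piece of the input, and then translate the abstract set descriptions in (\ref{eq:Y+}), (\ref{eq:Y-}), (\ref{eq:Yh}) into the explicit polyhedral inequalities stated in the theorem. The starting observation is that the image of a ReLU function commutes with set union: if $\mathcal{X} = \bigcup_{s=1}^N \mathcal{X}_s$, then directly from Definition~\ref{reach_set} one has $\mathcal{Y} = \bigcup_{s=1}^N \mathrm{ReLU}(\mathcal{X}_s)$. So the outer union in (\ref{thm2_1}) appears for free, and the problem reduces to computing $\mathrm{ReLU}(\mathcal{X}_s)$ for a single polyhedron $\mathcal{X}_s = \{\mathbf{x}\mid \mathbf{A}_s\mathbf{x}\le \mathbf{b}_s\}$ and showing it equals $\mathcal{Y}_s^+ \cup \mathcal{Y}_s^- \cup \bigcup_{h=1}^{2^n-2}\mathcal{Y}_{h,s}$ with the specific halfspace descriptions listed.

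For Case 1, I would substitute $\mathbf{x}\in \mathcal{X}_s$ into (\ref{eq:Y+}). Since the ReLU acts as the identity on $\mathcal{X}_s^+ = \mathcal{X}_s\cap\{\mathbf{x}>0\}$, the change of variables $\mathbf{y}=\mathbf{x}$ immediately yields the inequalities $\mathbf{A}_s\mathbf{y}\le\mathbf{b}_s$ together with $\mathbf{y}>\mathbf{0}$, matching the stated form of $\mathcal{Y}_s^+$. For Case 2, the output is the single point $\mathbf{0}$ whenever $\mathcal{X}_s^- := \mathcal{X}_s\cap\{\mathbf{x}\le\mathbf{0}\}\ne\emptyset$, and empty otherwise; writing $\mathbf{x}\le\mathbf{0}$ as $\mathbf{I}\mathbf{x}\le\mathbf{0}$ and stacking it with $\mathbf{A}_s\mathbf{x}\le\mathbf{b}_s$ produces exactly the claimed polyhedral description of $\mathcal{X}_s^-$, so this case reduces to a single polyhedral feasibility check.

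Case 3 is the main obstacle and requires the most care. For each sign pattern $\mathbf{p}_h$, $h=1,\dots,2^n-2$, the set $\mathcal{X}_{h,s}^+$ from Theorem~\ref{thm1} is the intersection of $\mathcal{X}_s$ with $(\mathbf{I}-\mathbf{P}_h)\mathbf{x}\le\mathbf{0}$ (forcing the $p_i=0$ coordinates to be non-positive) and $\mathbf{P}_h\mathbf{x}\ge\mathbf{0}$ (forcing the $p_i=1$ coordinates to be non-negative). The output is then $\mathbf{y}=\mathbf{P}_h\mathbf{x}$. The subtlety is that $\mathbf{P}_h$ is not invertible, so describing the image as a polyhedron in $\mathbf{y}$ requires recognising that on $\mathcal{X}_{h,s}^+$ the output vector $\mathbf{y}$ agrees with $\mathbf{x}$ on the active coordinates (where $p_i=1$) and equals zero on the inactive coordinates (where $p_i=0$). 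Under this identification the three constraint blocks defining $\mathcal{X}_{h,s}^+$ transport verbatim into the three constraint blocks on $\mathbf{y}$ displayed in the theorem: $\mathbf{A}_s\mathbf{y}\le\mathbf{b}_s$, $(\mathbf{I}-\mathbf{P}_h)\mathbf{y}\le\mathbf{0}$, and $-\mathbf{P}_h\mathbf{y}\le\mathbf{0}$.

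Finally, I would assemble the three cases, note that the decomposition $\mathcal{X}_s = \mathcal{X}_s^+\cup \mathcal{X}_s^- \cup \bigcup_h \mathcal{X}_{h,s}^-$ used in Theorem~\ref{thm1} is exhaustive so that no outputs are missed, and then take the union over $s$ to recover the full expression (\ref{thm2_1}). The only step that is not a direct rewriting of Theorem~\ref{thm1} is the image argument in Case 3, and the proof essentially consists of verifying that the coordinate-wise behaviour of $\mathbf{P}_h$ makes the substitution $\mathbf{y}\leftrightarrow\mathbf{x}$ preserve all three half-space constraints.
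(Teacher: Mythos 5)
Your overall route is exactly the paper's: split $\mathcal{X}$ into its polyhedral pieces, apply the three-case analysis of Theorem~\ref{thm1} to each $\mathcal{X}_s$, rewrite each case as an explicit system of half-space constraints, and take the union over $s$. The commutation of the ReLU image with unions, Case~1, and Case~2 match the paper's proof essentially line for line and are unproblematic.

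The problem is the step you yourself single out as ``the main obstacle.'' On $\mathcal{X}_{h,s}^{+}$ the ReLU acts as $\mathbf{x}\mapsto\mathbf{P}_h\mathbf{x}$, and your claim that the three constraint blocks ``transport verbatim'' under the substitution $\mathbf{y}=\mathbf{P}_h\mathbf{x}$ does not hold. The block $-\mathbf{P}_h\mathbf{x}\le\mathbf{0}$ does become $-\mathbf{P}_h\mathbf{y}\le\mathbf{0}$; but $(\mathbf{I}-\mathbf{P}_h)\mathbf{y}$ vanishes identically on the image, so that block degenerates rather than carrying the constraint $x_i\le 0$; and, more seriously, $\mathbf{A}_s\mathbf{x}\le\mathbf{b}_s$ does not imply $\mathbf{A}_s\mathbf{y}\le\mathbf{b}_s$, because $\mathbf{y}$ differs from $\mathbf{x}$ on every inactive coordinate with $x_i<0$ and the corresponding columns of $\mathbf{A}_s$ need not vanish. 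The true image $\left\{\mathbf{P}_h\mathbf{x}\mid\mathbf{x}\in\mathcal{X}_{h,s}^{+}\right\}$ is a coordinate projection of a polyhedron onto the subspace $\left\{\mathbf{y}\mid y_i=0 \text{ for } p_i=0\right\}$; it is still a polyhedron, but its half-space description requires eliminating the inactive variables (e.g.\ by Fourier--Motzkin), and in general it neither contains nor is contained in the set $\mathcal{Y}_{h,s}$ displayed in the theorem. To be fair, the paper's own proof makes precisely the same leap: having stated that the output is $\mathbf{P}_h\mathbf{x}$, it nonetheless asserts $\mathbf{y}=\mathbf{x}$ for $\mathbf{x}\in\mathcal{X}_{h,s}^{+}$ and concludes $\mathcal{Y}_{h,s}=\mathcal{X}_{h,s}^{+}$. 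So your proposal faithfully reproduces the published argument, but the step you flagged as the only non-trivial one is a genuine gap in both; a complete argument would either restrict to the face $\left\{\mathbf{x}\in\mathcal{X}_{h,s}^{+}\mid x_i=0 \text{ for } p_i=0\right\}$, on which $\mathbf{y}=\mathbf{x}$ really does hold, or replace $\mathcal{Y}_{h,s}$ by the projected polyhedron.
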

\begin{proof}
	First, when $\mathbf{x} \in \mathcal{X}^{+}_s $, it has $\mathbf{y} = \mathbf{x}$, thus the output set $\mathcal{Y}_s^{+} = \mathcal{X}_s^{+}$, which is
	\begin{equation*} \label{pthm2_1}
	\mathcal{Y}_s^{+} = \left\{\mathbf{y} \mid \mathbf{A}_s \mathbf{y}\le \mathbf{b}_s \wedge \mathbf{y} > \mathbf{0} ,~\mathbf{y} \in \mathbb{R}^n \right\}.
	\end{equation*}
	
	Then, if $\mathbf{x} \in \mathcal{X}^{-}_s$, where $\mathcal{X}_s^-$ can be defined by
	\begin{equation*}
	\mathcal{X}_s^- = \left\{\mathbf{x} \mid \left[ {\begin{array}{*{20}c}
		{\mathbf{A}_s }  \\
		{\mathbf{I} }  \\
		\end{array} } \right]\mathbf{x} \le \left[ {\begin{array}{*{20}c}
		{\mathbf{b}_s }  \\
		\mathbf{0}  \\
		\end{array} } \right],~\mathbf{x} \in \mathbb{R}^n\right\}.
	\end{equation*}
	
	According to the definition of ReLU functions, it directly shows that the output set $\mathcal{Y}_s^-$ is
	\begin{equation*} \label{pthm2_3}
	\mathcal{Y}_s^ -   = \left\{ {\begin{array}{*{20}c}
		{\{ \mathbf{0}\}, } & {\mathcal{X}_s^ -   \ne \emptyset }  \\
		\emptyset,  & {\mathcal{X}_s^ -  = \emptyset }  \\	
		\end{array} } \right..
	\end{equation*}

	Finally, we consider $x \in \mathcal{X}_{h,s}^{-}$, $h=1,\ldots,2^{n}-2$, which can be expressed by
	\begin{equation*} \label{pthm2_6}
	\mathcal{X}_{h,s}^{-} = \left\{\mathbf{x} \mid
	\left[ {\begin{array}{*{20}c}
		{\mathbf{A}_s }  \\
		{\mathbf{I} - \mathbf{P}_h }  \\
		\end{array} } \right]\mathbf{x} \leq \left[ {\begin{array}{*{20}c}
		{\mathbf{b}_s }  \\
		\mathbf{0}  \\
		\end{array} } \right],~\mathbf{x} \in \mathbb{R}^n\right\}
	\end{equation*}
	where $h=1,\ldots,2^{n}-2$.
	
	Adding the additional constraint that $\mathbf{y} = \mathbf{P}_h \mathbf{x}\ge 0$, set $\mathcal{X}_{h,s}^+$, $h=1,\ldots,2^{n}-2$, is expressed as
	\begin{equation*} \label{pthm2_7}
	\mathcal{X}_{h,s}^{+} = \left\{\mathbf{x} \mid
	\left[ {\begin{array}{*{20}c}
		{\mathbf{A}_s }  \\
		{\mathbf{I} - \mathbf{P}_h }  \\
		{-\mathbf{P}_h }\\
		\end{array} } \right]\mathbf{x} \leq \left[ {\begin{array}{*{20}c}
		{\mathbf{b}_s }  \\
		\mathbf{0}  \\
		\mathbf{0} \\	
		\end{array} } \right],~\mathbf{x} \in \mathbb{R}^n\right\}
	\end{equation*}
	with $h=1,\ldots,2^{n}-2$.
	
	By the fact of $\mathbf{y} = \mathbf{x}$, $\mathbf{x} \in \mathcal{X}_{h,s}^{+}$, $h=1,\ldots,2^{n}-2$, it can be obtained that
	\begin{equation*} \label{pthm2_8}
	\mathcal{Y}_{h,s}=\mathcal{X}_{h,s}^{+},~h=1,\ldots,2^{n}-2.
	\end{equation*}
	
	Thus, based on Theorem \ref{thm1}, the output reachable set for input set $\mathcal{X}_s$ is
	\begin{equation*} \label{pthm2_9}
	\mathcal{Y}_s = \mathcal{Y}^{+}_s \cup \mathcal{Y}^{-}_s \cup \left(\bigcup\nolimits_{h=1}^{2^n-2}\mathcal{Y}_{h,s}\right).
	\end{equation*}
	
	Moreover, for input set $\mathcal{X}=\bigcup\nolimits_{s=1}^{N}\mathcal{X}_s$,  the output reachable set is $\mathcal{Y} = \bigcup\nolimits_{s=1}^{N}\mathcal{Y}_s$, which implies (\ref{thm2_1}) holds. 
\end{proof}

According to the result in Theorem \ref{thm2}, the following useful corollary can be derived before ending this subsection.

\begin{corollary}\label{cor1}
	Given a ReLU function $\mathbf{y} = \mathrm{ReLU}(\mathbf{x})$, if the input set $\mathcal{X}$ is a union of polyhedra, then the output reachable set $\mathcal{Y}$  is also a union of polyhedra.
\end{corollary}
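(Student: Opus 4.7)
The plan is to read the corollary directly out of Theorem \ref{thm2}. Writing the input set as $\mathcal{X} = \bigcup_{s=1}^{N} \mathcal{X}_s$ with each $\mathcal{X}_s = \{\mathbf{x} \mid \mathbf{A}_s \mathbf{x} \le \mathbf{b}_s\}$ a polyhedron, Theorem \ref{thm2} already exhibits
\[
\mathcal{Y} \;=\; \bigcup_{s=1}^{N}\!\left(\mathcal{Y}_s^{+} \,\cup\, \mathcal{Y}_s^{-} \,\cup\, \bigcup_{h=1}^{2^{n}-2}\mathcal{Y}_{h,s}\right)
\]
as a finite union. Since a finite union of finite unions of polyhedra is again a finite union of polyhedra, all that remains is to verify, piece by piece, that each of $\mathcal{Y}_s^{+}$, $\mathcal{Y}_s^{-}$, and $\mathcal{Y}_{h,s}$ is either empty, a single polyhedron, or a union of polyhedra.

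I would carry out this check by matching the formulas in Theorem \ref{thm2} against the definition of a polyhedron as a finite intersection of closed halfspaces. The sets $\mathcal{Y}_{h,s}$ are already written as $\{\mathbf{y} \mid \mathbf{M}_{h,s}\mathbf{y} \le \mathbf{c}_{h,s}\}$ for an explicit $\mathbf{M}_{h,s}$ and $\mathbf{c}_{h,s}$ obtained by vertically stacking $\mathbf{A}_s$, $\mathbf{I}-\mathbf{P}_h$, and $-\mathbf{P}_h$, so they are polyhedra by inspection. The set $\mathcal{Y}_s^{-}$ is either $\emptyset$ (trivially a polyhedron) or the singleton $\{\mathbf{0}\}$, which can be written as $\{\mathbf{y} \mid \mathbf{y} \le \mathbf{0} \wedge -\mathbf{y} \le \mathbf{0}\}$.

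The only step that requires any thought is $\mathcal{Y}_s^{+}$, which Theorem \ref{thm2} defines with the strict inequality $\mathbf{y} > \mathbf{0}$; strict halfspaces are not admitted in the standard definition of a polyhedron. I would handle this by replacing $\mathcal{Y}_s^{+}$ by its closure $\overline{\mathcal{Y}_s^{+}} = \{\mathbf{y} \mid \mathbf{A}_s \mathbf{y} \le \mathbf{b}_s \wedge -\mathbf{y} \le \mathbf{0}\}$ and observing that the points added, namely those where some coordinate $y_i$ equals zero, are already captured by the appropriate $\mathcal{Y}_{h,s}$ pieces (those whose indicator $\mathbf{p}_h$ has $p_i = 0$ at the vanishing coordinates); hence the value of the overall union $\mathcal{Y}$ is unchanged if we substitute $\overline{\mathcal{Y}_s^{+}}$ for $\mathcal{Y}_s^{+}$.

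I do not anticipate any genuine obstacle: the content of the corollary is purely a bookkeeping remark on top of Theorem \ref{thm2}, and the strict-versus-non-strict inequality issue just sketched is the only point that requires more than a one-line verification.
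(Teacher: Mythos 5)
Your proposal follows the same route as the paper: Theorem~\ref{thm2} already writes $\mathcal{Y}$ as a finite union of the sets $\mathcal{Y}_s^{+}$, $\mathcal{Y}_s^{-}$, $\mathcal{Y}_{h,s}$, and the corollary reduces to checking that each piece is (empty or) a polyhedron, which is exactly what the paper's two-line proof does. The one place you go beyond the paper is the strict inequality in $\mathcal{Y}_s^{+}$: the paper simply asserts that all three families "are all defined as polyhedra," whereas you correctly note that $\left\{\mathbf{y} \mid \mathbf{A}_s\mathbf{y}\le\mathbf{b}_s \wedge \mathbf{y}>\mathbf{0}\right\}$ is not a closed polyhedron under the paper's own $\mathbf{A}\mathbf{x}\le\mathbf{b}$ definition, and your repair --- pass to the closure and observe that the added boundary points are absorbed by the appropriate $\mathcal{Y}_{h,s}$ (and by $\mathcal{Y}_s^{-}$ when $\mathbf{y}=\mathbf{0}$) --- is a legitimate fix of a detail the paper leaves implicit.
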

\begin{proof}
	By Theorem \ref{thm2}, $\mathcal{Y}^+_s$, $\mathcal{Y}^-_s$, $\mathcal{Y}_{h,s}$, $h=1,\ldots,2^n-1$, $s=1,\ldots,N$, are all defined as polyhedra when input set $\mathcal{X}$ is a union of polyhedra, thus $\mathcal{Y} = \bigcup\nolimits_{s=1}^{N} \left(\mathcal{Y}^{+}_s \cup \mathcal{Y}^{-}_s \cup \left( \bigcup\nolimits_{h=1}^{2^n-2}\mathcal{Y}_{h,s}\right)\right)$ is a union of polyhedra. The proof is complete.
\end{proof}

Theorems \ref{thm1} and \ref{thm2} present the output reach set of a ReLU function. Following those results for ReLU functions, the reachable set for neural networks composed of  ReLU  activations will be studied in the next section.

\section{Reach Set Computation and Verification for ReLU Neural Networks}
Based on the reachability analysis for ReLU activation functions in previous section, we are ready for the main problems in this paper, the reachable set computation and verification problems for ReLU neural networks. First, ReLU neural networks can be expressed recursively  in a layer-by-layer form of
\begin{align}\label{ReLU_NN}
\left\{ {\begin{array}{*{20}l}
	{\mathbf{y}^{[\ell]}=\mathrm{ReLU}(\mathbf{W}^{[\ell]}\mathbf{x}^{[\ell]}+\boldsymbol{\theta}^{[\ell]}),~\ell = 1,\ldots,L}  \\
	{\mathbf{x}^{[\ell+1]} = \mathbf{y}^{[\ell]},~\ell = 1,\ldots,L-1}  \\
	\mathbf{x}^{[1]} = \mathbf{x}^{[0]}
	\end{array} } \right.
\end{align}
where $\mathbf{x}^{[0]} \in \mathcal{X}^{[0]} \subset \mathbb{R}^{n^{[0]}}$ is the input set defined by a union of polyhedra as in (\ref{inputset}).
The input set and output set of layer $\ell$ are denoted as $\mathcal{X}^{[\ell]}$ and $\mathcal{Y}^{[\ell]}$, respectively.

\begin{lemma}\label{lem1}
	Consider neural network (\ref{ReLU_NN}) with input set $\mathcal{X}^{[0]}$ defined by a union of polyhedra (\ref{inputset}), the output sets of each layer $\mathcal{Y}^{[\ell]}$, $\ell = 1, \ldots,L$, are all defined by a union of polyhedra.
\end{lemma}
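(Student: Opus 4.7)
The plan is a straightforward induction on the layer index $\ell$, with Corollary~\ref{cor1} serving as the workhorse for the nonlinear step and a classical fact about affine images of polyhedra handling the linear step. The lemma really amounts to checking that each layer is the composition of two operations, an affine map followed by an element-wise ReLU, and that the class of ``unions of polyhedra'' is closed under both.

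For the base case $\ell=1$, the layer input is $\mathbf{x}^{[1]} = \mathbf{x}^{[0]} \in \mathcal{X}^{[0]}$, which is a union of polyhedra by (\ref{inputset}). For the inductive step, assume $\mathcal{Y}^{[\ell-1]} = \bigcup_{s=1}^{N_{\ell-1}} \mathcal{X}^{[\ell]}_s$ with each $\mathcal{X}^{[\ell]}_s = \{\mathbf{x} \mid \mathbf{A}_s \mathbf{x} \le \mathbf{b}_s\}$, using $\mathbf{x}^{[\ell]} = \mathbf{y}^{[\ell-1]}$. I would then argue in two substeps. First, the pre-activation set $\mathcal{Z}^{[\ell]} \triangleq \mathbf{W}^{[\ell]} \mathcal{X}^{[\ell]} + \boldsymbol{\theta}^{[\ell]}$ is a union of polyhedra, since the affine image of any single polyhedron $\mathcal{X}^{[\ell]}_s$ is a polyhedron (a standard projection/Fourier--Motzkin argument when $\mathbf{W}^{[\ell]}$ is not invertible, and an immediate substitution $\mathbf{z} = \mathbf{W}^{[\ell]}\mathbf{x} + \boldsymbol{\theta}^{[\ell]}$ when it is), and affine images distribute over finite unions. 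Second, $\mathcal{Y}^{[\ell]} = \mathrm{ReLU}(\mathcal{Z}^{[\ell]})$ is a union of polyhedra by Corollary~\ref{cor1}, which was proved precisely for the case in which the ReLU input set is a finite union of polyhedra.

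The only step that is not completely mechanical is the affine image substep: one must note that when $\mathbf{W}^{[\ell]}$ is rank-deficient or has fewer rows than columns (both common in neural network layers), the image $\{\mathbf{W}^{[\ell]}\mathbf{x} + \boldsymbol{\theta}^{[\ell]} \mid \mathbf{A}_s \mathbf{x} \le \mathbf{b}_s\}$ is a polyhedron by projecting the lifted polyhedron $\{(\mathbf{x},\mathbf{z}) \mid \mathbf{A}_s\mathbf{x}\le \mathbf{b}_s,\ \mathbf{z}=\mathbf{W}^{[\ell]}\mathbf{x}+\boldsymbol{\theta}^{[\ell]}\}$ onto the $\mathbf{z}$-coordinates. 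Beyond that, the two substeps compose to give $\mathcal{Y}^{[\ell]}$ as a union of polyhedra, closing the induction and establishing the lemma for all $\ell = 1,\ldots,L$.
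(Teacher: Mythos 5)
Your proof follows essentially the same route as the paper's: iterate layer by layer, using closure of unions of polyhedra under affine maps for the pre-activation set and then Corollary~\ref{cor1} for the ReLU step. The only difference is that you explicitly justify the affine-image substep (via projection of the lifted polyhedron when $\mathbf{W}^{[\ell]}$ is not invertible), a detail the paper simply asserts; this is a welcome tightening rather than a departure.
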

\begin{proof}
	Consider input set of layer $\ell=1$,  $\mathcal{X}^{[1]}=\mathcal{X}^{[0]}$ which is a union of polyhedra, the following set which is an affine map of $\mathcal{X}^{[1]}$ should be a union of polyhedra
	\begin{equation*}
	\tilde{\mathcal{X}}^{[1]} = \left\{\tilde{\mathbf{x}}^{[1]} \mid \tilde{\mathbf{x}}^{[1]}=\mathbf{W}^{[1]}\mathbf{x}^{[1]}+\boldsymbol{\theta}^{[1]},~\mathbf{x} \in \mathcal{X}^{[1]}\right\}.
	\end{equation*}
	
	Then, using Corollary \ref{cor1} and $\mathbf{y}^{[1]} = \mathrm{ReLU}(\tilde{\mathbf{x}}^{[1]})$, the output reachable set $\mathcal{Y}^{[1]}$ is a union of polyhedra.
	
	Also from (\ref{ReLU_NN}), because we have
	\begin{equation*}
	\mathcal{X}^{[\ell+1]} = \mathcal{Y}^{[\ell]},~~\ell = 1,\ldots,L-1
	\end{equation*}
	The above procedure can be iterated from $\ell=1$ to $\ell = L$ to claim $\mathcal{Y}^{[\ell]}$, $\ell=1,\ldots,L$, are all defined by a union of polyhedra. 
\end{proof}

By Lemma \ref{lem1}, the output sets of each layer are defined as a union of polyhedra, and due to $\mathcal{X}^{[\ell+1]} = \mathcal{Y}^{[\ell]}$, the input set of each layer can be represented as $\mathcal{X}^{[\ell]}= \bigcup_{s=1}^{N_{\ell}}\mathcal{X}_s^{[\ell]}$, in which  $\mathcal{X}_s^{[\ell]}$ is
\begin{equation}\label{layer_input}
\mathcal{X}^{[\ell]}_s \triangleq\left\{ \mathbf{x}^{[\ell]} \mid  \mathbf{A}_s^{[\ell]}\mathbf{x}^{[\ell]} \le \mathbf{b}_s^{[\ell]},~\mathbf{x} \in \mathbb{R}^{n^{[\ell]}}\right\}.
\end{equation}

With regard to the input set of layer $\ell$ described by (\ref{layer_input}), the output reachable set can be obtained by the following theorem, which is the main result in this paper.
\begin{theorem}
	Consider layer $\ell$ of ReLU neural network (\ref{ReLU_NN}) with input set $\mathcal{X}^{[\ell]}$ defined by (\ref{layer_input}), the output reachable set $\mathcal{Y}^{[\ell]}$ of layer $\ell$  is
	\begin{equation} \label{thm3_1}
	\mathcal{Y}^{[\ell]} = \bigcup\nolimits_{s=1}^{N_{\ell}} \left(\mathcal{Y}^{[\ell]+}_s \cup \mathcal{Y}^{[\ell]-}_s \cup\left( \bigcup\nolimits_{h=1}^{2^{n^{[\ell]}}-2}\mathcal{Y}_{h,s}^{[\ell]}\right)\right)
	\end{equation}
	where
	\begin{align*}
	\mathcal{Y}_s^{[\ell]+}& = \left\{\mathbf{y} \mid \mathbf{y} =\mathbf{W}^{[\ell]}\mathbf{x}^{[\ell]}+\boldsymbol{\theta}^{[\ell]},~\mathbf{x} \in \mathcal{X}_s^{[\ell]+}\right\},
	\\
	\mathcal{X}_s^{[\ell]+} &= \left\{\mathbf{x}^{[\ell]} \mid \mathbf{A}_s^{[\ell]} \mathbf{x}^{[\ell]}\le \mathbf{b}_s^{[\ell]} \wedge \mathbf{W}^{[\ell]}\mathbf{x}^{[\ell]} > -\boldsymbol{\theta}^{[\ell]} ,~\mathbf{x}^{[\ell]} \in \mathbb{R}^{n^{[\ell]}} \right\};
	\\
	\mathcal{Y}_s^{[\ell]-}  & = \left\{ {\begin{array}{*{20}c}
		{\{ \mathbf{0}\}, } & {\mathcal{X}_s^ {[\ell]-}   \ne \emptyset }  \\
		\emptyset,  & {\mathcal{X}_s^ {[\ell]-}  = \emptyset }  \\	
		\end{array} } \right.,
	\\
	\mathcal{X}_s^{[\ell]-}& = \left\{\mathbf{x}^{[\ell]} \mid \left[ {\begin{array}{*{20}c}
		{\mathbf{A}_s^{[\ell]} }  \\
		{\mathbf{\mathbf{W}^{[\ell]}} }  \\
		\end{array} } \right]\mathbf{x}^{[\ell]} \le \left[ {\begin{array}{*{20}c}
		{\mathbf{b}_s^{[\ell]} }  \\
		\mathbf{-\boldsymbol{\theta}}^{[\ell]}  \\
		\end{array} } \right],~\mathbf{x}^{[\ell]} \in \mathbb{R}^{n^{[\ell]}}\right\};
	\\
	\mathcal{Y}_{h,s}^{[\ell]}& = \left\{\mathbf{y} \mid \mathbf{y} =\mathbf{W}^{[\ell]}\mathbf{x}^{[\ell]}+\boldsymbol{\theta}^{[\ell]},~\mathbf{x} \in \mathcal{X}_{h,s}^{[\ell]}\right\},
	\\
	\mathcal{X}_{h,s}^{[\ell]} &=\left\{\mathbf{x}^{[\ell]} \mid \tilde{\mathbf{A}}_{h,s}^{[\ell]}\mathbf{x}^{[\ell]} \le \tilde{\mathbf{b}}_{h,s}^{[\ell]} ,~\mathbf{x}^{[\ell]} \in \mathbb{R}^{n^{[\ell]}}\right\},
	\\
	\tilde{\mathbf{A}}_{h,s}^{[\ell]}&=\left[ {\begin{array}{*{20}c}
		{\mathbf{A}_s^{[\ell]} }  \\
		{(\mathbf{I} - \mathbf{P}_h)\mathbf{W}^{[\ell]}}  \\
		{ - \mathbf{P}_h \mathbf{W}^{[\ell]}}  \\
		\end{array} } \right],~\tilde{\mathbf{b}}_{h,s}^{[\ell]}=\left[ {\begin{array}{*{20}c}
		{\mathbf{b}_s^{[\ell]} }  \\
		( \mathbf{P}_h-\mathbf{I})\boldsymbol{\theta}^{[\ell]}  \\
		\mathbf{P}_h\boldsymbol{\theta}^{[\ell]}  \\	
		\end{array} } \right].
	\end{align*}
\end{theorem}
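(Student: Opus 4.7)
The plan is to reduce the statement to Theorem \ref{thm2} via the affine change of variables $\tilde{\mathbf{x}}^{[\ell]} \triangleq \mathbf{W}^{[\ell]}\mathbf{x}^{[\ell]}+\boldsymbol{\theta}^{[\ell]}$, so that the layer map factors as an affine pre-processing followed by a bare ReLU. Writing $\mathbf{y}^{[\ell]} = \mathrm{ReLU}(\tilde{\mathbf{x}}^{[\ell]})$, I can view the image of each polyhedron $\mathcal{X}_s^{[\ell]}$ under the affine map as the new input set to a pure ReLU block, apply Theorem \ref{thm2} there, and then pull the resulting polyhedral constraints back to the original $\mathbf{x}^{[\ell]}$-coordinates by substituting $\tilde{\mathbf{x}}^{[\ell]} = \mathbf{W}^{[\ell]}\mathbf{x}^{[\ell]}+\boldsymbol{\theta}^{[\ell]}$.

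First I would fix $s$ and unfold the three cases of Theorem \ref{thm2} for the intermediate set $\tilde{\mathcal{X}}_s^{[\ell]} = \{\mathbf{W}^{[\ell]}\mathbf{x}^{[\ell]}+\boldsymbol{\theta}^{[\ell]} : \mathbf{A}_s^{[\ell]}\mathbf{x}^{[\ell]} \le \mathbf{b}_s^{[\ell]}\}$. In the all-positive branch the condition $\tilde{\mathbf{x}}^{[\ell]} > \mathbf{0}$ becomes $\mathbf{W}^{[\ell]}\mathbf{x}^{[\ell]} > -\boldsymbol{\theta}^{[\ell]}$, which together with $\mathbf{A}_s^{[\ell]}\mathbf{x}^{[\ell]}\le \mathbf{b}_s^{[\ell]}$ reproduces $\mathcal{X}_s^{[\ell]+}$; since $\mathbf{y}^{[\ell]} = \tilde{\mathbf{x}}^{[\ell]}$ on this branch, its image under the affine map is exactly $\mathcal{Y}_s^{[\ell]+}$. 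In the all-non-positive branch, $\tilde{\mathbf{x}}^{[\ell]}\le \mathbf{0}$ translates to $\mathbf{W}^{[\ell]}\mathbf{x}^{[\ell]}\le -\boldsymbol{\theta}^{[\ell]}$, stacked with $\mathbf{A}_s^{[\ell]}\mathbf{x}^{[\ell]}\le \mathbf{b}_s^{[\ell]}$ this gives the stated $\mathcal{X}_s^{[\ell]-}$, and the output collapses to $\{\mathbf{0}\}$ when this polyhedron is nonempty, matching $\mathcal{Y}_s^{[\ell]-}$.

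For the mixed branch I would substitute $\tilde{\mathbf{x}}^{[\ell]} = \mathbf{W}^{[\ell]}\mathbf{x}^{[\ell]}+\boldsymbol{\theta}^{[\ell]}$ into the two blocks $(\mathbf{I}-\mathbf{P}_h)\tilde{\mathbf{x}}^{[\ell]}\le \mathbf{0}$ and $-\mathbf{P}_h \tilde{\mathbf{x}}^{[\ell]}\le \mathbf{0}$ that Theorem \ref{thm2} assembles for index $h$. Moving the constant terms to the right-hand side yields the blocks $(\mathbf{I}-\mathbf{P}_h)\mathbf{W}^{[\ell]}\mathbf{x}^{[\ell]}\le (\mathbf{P}_h-\mathbf{I})\boldsymbol{\theta}^{[\ell]}$ and $-\mathbf{P}_h\mathbf{W}^{[\ell]}\mathbf{x}^{[\ell]}\le \mathbf{P}_h\boldsymbol{\theta}^{[\ell]}$, which together with $\mathbf{A}_s^{[\ell]}\mathbf{x}^{[\ell]}\le \mathbf{b}_s^{[\ell]}$ form precisely the $\tilde{\mathbf{A}}_{h,s}^{[\ell]},\tilde{\mathbf{b}}_{h,s}^{[\ell]}$ stacks in the statement; applying the affine map then gives $\mathcal{Y}_{h,s}^{[\ell]}$. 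Finally, taking the union over $h=1,\dots,2^{n^{[\ell]}}-2$ and over $s=1,\dots,N_\ell$ yields (\ref{thm3_1}) because $\mathcal{X}^{[\ell]}=\bigcup_s \mathcal{X}^{[\ell]}_s$ and images commute with unions.

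I do not expect a genuine obstacle — the argument is essentially a bookkeeping exercise once the change of variables is in place. The only subtle point to articulate carefully is that Theorem \ref{thm2} was proved for a ReLU acting on an input set expressed as a union of $H$-polyhedra in the variable it consumes, whereas here the consumed variable is $\tilde{\mathbf{x}}^{[\ell]}$; I must therefore either first rewrite $\tilde{\mathcal{X}}_s^{[\ell]}$ as an $H$-polyhedron in $\tilde{\mathbf{x}}^{[\ell]}$ and then substitute back, or, as I prefer for cleanliness, observe that the entire derivation of Theorem \ref{thm2} goes through verbatim once the affine composition is kept symbolic, so that the polyhedral descriptions can be written directly in the $\mathbf{x}^{[\ell]}$-coordinates as above.
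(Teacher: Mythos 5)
Your proposal is correct and follows essentially the same route as the paper: the paper's own proof likewise reruns the three-case analysis of Theorem \ref{thm2} with the affine map $\mathbf{W}^{[\ell]}\mathbf{x}^{[\ell]}+\boldsymbol{\theta}^{[\ell]}$ composed in, keeping all polyhedral constraints in the $\mathbf{x}^{[\ell]}$-coordinates, which is exactly your preferred resolution of the only subtle point you flag (avoiding the need for an $H$-representation of the image of a polyhedron under a possibly non-invertible affine map). The constraint stacks you derive by substitution match the paper's $\tilde{\mathbf{A}}_{h,s}^{[\ell]}$, $\tilde{\mathbf{b}}_{h,s}^{[\ell]}$ exactly, and the final union over $h$ and $s$ is taken the same way.
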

\begin{proof}
	The proof is briefly presented below since it is similar to the proof line for Theorem \ref{thm2}.
	
	Like the proof line in Theorem \ref{thm2}, when $\mathbf{y} > 0$ which means $\mathbf{W}^{[\ell]}\mathbf{x}^{[\ell]} + \boldsymbol{\theta}^{[\ell]} < 0$, combining $\mathbf{A}_s^{[\ell]}\mathbf{x}^{[\ell]}\le \mathbf{b}^{[\ell]}_s$, it defines a set  $\mathcal{X}_s^{[\ell]+}$ as
	\begin{equation*}
	\mathcal{X}_s^{[\ell]+} = \left\{\mathbf{x}^{[\ell]} \mid \mathbf{A}_s^{[\ell]} \mathbf{x}^{[\ell]}\le \mathbf{b}_s^{[\ell]} \wedge \mathbf{W}^{[\ell]}\mathbf{x}^{[\ell]} > -\boldsymbol{\theta}^{[\ell]} ,\mathbf{x}^{[\ell]} \in \mathbb{R}^{n^{[\ell]}} \right\}.
	\end{equation*}
	
	Moreover, due to $\mathbf{y}^{[\ell]} = \mathbf{W}^{[\ell]}\mathbf{x}^{[\ell]}+\boldsymbol{\theta}^{[\ell]}$, the output reachable set for $\mathbf{y}^{[\ell]}$ is
	\begin{equation*}
	\mathcal{Y}_s^{[\ell]+} = \left\{\mathbf{y} \mid \mathbf{y} =\mathbf{W}^{[\ell]} \mathbf{x}^{[\ell]}+\boldsymbol{\theta}^{[\ell]},~\mathbf{x} \in \mathcal{X}_s^{[\ell]+}\right\}.
	\end{equation*}
	
	Similarly, if $\mathbf{y} \le 0$, we have set $\mathcal{X}_s^{[\ell]-}$ defined by
	\begin{equation*}
	\mathcal{X}_s^{[\ell]-} = \left\{\mathbf{x}^{[\ell]} \mid \left[ {\begin{array}{*{20}c}
		{\mathbf{A}_s^{[\ell]} }  \\
		{\mathbf{\mathbf{W}^{[\ell]}} }  \\
		\end{array} } \right]\mathbf{x}^{[\ell]} \le \left[ {\begin{array}{*{20}c}
		{\mathbf{b}_s^{[\ell]} }  \\
		\mathbf{-\boldsymbol{\theta}}^{[\ell]}  \\
		\end{array} } \right],~\mathbf{x}^{[\ell]} \in \mathbb{R}^{n^{[\ell]}}\right\}.
	\end{equation*}
	
	Using ReLU function to get
	\begin{equation*}
	\mathcal{Y}_s^ {[\ell]-}   = \left\{ {\begin{array}{*{20}c}
		{\{ \mathbf{0}\}, } & {\mathcal{X}_s^ {[\ell]-}   \ne \emptyset }  \\
		\emptyset,  & {\mathcal{X}_s^  {[\ell]-}  = \emptyset }  \\	
		\end{array} } \right..
	\end{equation*}

	Lastly, we consider the case that $\mathbf{y}$ has both positive and non-positive elements, that is 	 	
	$\mathbf{y} \in \mathcal{Y}_{h,s}^{[\ell]-}$, $h=1,\ldots,2^n-2$, where $\mathcal{Y}_{h,s}^{[\ell]-}$ is expressed as follows
	\begin{align*}
	\mathcal{Y}_{h,s}^{[\ell]} &= \left\{\mathbf{y} \mid \mathbf{y} =\mathbf{W}^{[\ell]}\mathbf{x}^{[\ell]}+\boldsymbol{\theta}^{[\ell]},~\mathbf{x} \in \mathcal{X}_{h,s}^{[\ell]-}\right\},
	\\
	\mathcal{X}_{h,s}^{[\ell]-}&= \left\{\mathbf{x}^{[\ell]} \mid \hat{\mathbf{A}}_{h,s}^{[\ell]}\mathbf{x}^{[\ell]} \le \hat{\mathbf{b}}_{h,s}^{[\ell]},~\mathbf{x}^{[\ell]} \in \mathbb{R}^{n^{[\ell]}}\right\}.
	\end{align*}
	where 
	\begin{equation*}
	\hat{\mathbf{A}}_{h,s}^{[\ell]}=\left[ {\begin{array}{*{20}c}
		{\mathbf{A}_s^{[\ell]} }  \\
		{(\mathbf{I} - \mathbf{P}_h)\mathbf{W}^{[\ell]}}  \\
		\end{array} } \right],~ \hat{\mathbf{b}}_{h,s}^{[\ell]} = \left[ {\begin{array}{*{20}c}
		{\mathbf{b}_s^{[\ell]} }  \\
		( \mathbf{P}_h-\mathbf{I})\boldsymbol{\theta}^{[\ell]}  \\	
		\end{array} } \right].
	\end{equation*}
	
	Furthermore, due to $\mathbf{y} \ge 0$,  an additional constraint $\mathbf{P}_h(\mathbf{W}^{[\ell]}\mathbf{x}^{[\ell]}+\boldsymbol{\theta}^{[\ell]})\ge 0$ should be added to $\mathcal{X}_{h,s}^{[\ell]-}$ to obtain $\mathcal{X}_{h,s}^{[\ell]}$ as
	\begin{equation*}
	\mathcal{X}_{h,s}^{[\ell]} =\left\{\mathbf{x}^{[\ell]} \mid \tilde{\mathbf{A}}_{h,s}^{[\ell]} \mathbf{x}^{[\ell]} \le \tilde{\mathbf{b}}_{h,s}^{[\ell]} ,~\mathbf{x}^{[\ell]} \in \mathbb{R}^{n^{[\ell]}}\right\}
	\end{equation*}
	where 
	\begin{equation*}
	\tilde{\mathbf{A}}_{h,s}^{[\ell]} = \left[ {\begin{array}{*{20}c}
		{	\hat {\mathbf{A}}_{h,s}^{[\ell]} }  \\
		{ - \mathbf{P}_h \mathbf{W}^{[\ell]}}  \\
		\end{array} } \right],~	\tilde{\mathbf{b}}_{h,s}^{[\ell]}=\left[ {\begin{array}{*{20}c}
		{	\hat{\mathbf{b}}_{h,s}^{[\ell]} }   \\
		\mathbf{P}_h\boldsymbol{\theta}^{[\ell]}  \\	
		\end{array} } \right].
	\end{equation*}

	Then, following the guidelines of Theorem \ref{thm2}, the output reachable set of the form (\ref{thm3_1}) can be established. 
\end{proof}

As for linear activations, which are commonly used in the output layer, the output reachable set can be computed like the set $\mathcal{Y}_s^{[\ell]+}$ for ReLU, without the constraint $\mathbf{y} > 0$. The following corollary is given  for  linear layers.

\begin{corollary}
	Consider a linear layer with input set $\mathcal{X}^{[\ell]}$ defined by (\ref{layer_input}), the output reachable set $\mathcal{Y}^{[\ell]}$ of linear layer $\ell$ is
	\begin{equation}
	\mathcal{Y}^{[\ell]} = \bigcup\nolimits_{s=1}^{N_{\ell}}\mathcal{Y}_{s}^{[\ell]}
	\end{equation}
	where
	$
	\mathcal{Y}_s^{[\ell]} = \left\{\mathbf{y} \mid \mathbf{y} =\mathbf{W}^{[\ell]}\mathbf{x}^{[\ell]}+\boldsymbol{\theta}^{[\ell]},~\mathbf{x} \in \mathcal{X}_s^{[\ell]}\right\}
	$.
\end{corollary}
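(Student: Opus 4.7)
The plan is to recognize that a linear layer is exactly the preceding theorem's construction with the ReLU step removed, so no sign-indicator partitioning of the input is needed at all. Under a linear activation we have simply $\mathbf{y}^{[\ell]} = \mathbf{W}^{[\ell]}\mathbf{x}^{[\ell]} + \boldsymbol{\theta}^{[\ell]}$, with no constraint forcing $\mathbf{y}^{[\ell]} > \mathbf{0}$ or $\mathbf{y}^{[\ell]} \le \mathbf{0}$, so none of the three cases that drove the ReLU proof are relevant here.

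First I would unfold the definition of the reachable set given in Definition~\ref{reach_set} restricted to a single layer: writing $F_\ell(\mathbf{x}) \triangleq \mathbf{W}^{[\ell]}\mathbf{x} + \boldsymbol{\theta}^{[\ell]}$, the set $\mathcal{Y}^{[\ell]}$ is by definition the image $F_\ell(\mathcal{X}^{[\ell]})$. Next, I would use the elementary fact that set-theoretic image distributes over arbitrary unions, so that $F_\ell\!\bigl(\bigcup_{s=1}^{N_\ell}\mathcal{X}^{[\ell]}_s\bigr) = \bigcup_{s=1}^{N_\ell} F_\ell(\mathcal{X}^{[\ell]}_s)$. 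Finally, I would observe that $F_\ell(\mathcal{X}^{[\ell]}_s)$ is precisely the set $\mathcal{Y}^{[\ell]}_s$ written in the statement, which concludes the argument.

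The only thing worth flagging, rather than any real obstacle, is a sanity check that the per-piece output $\mathcal{Y}^{[\ell]}_s$ is indeed a polyhedron, so that the structure assumed in Lemma~\ref{lem1} is preserved through a linear layer: since $\mathcal{X}^{[\ell]}_s$ in (\ref{layer_input}) is a polyhedron and affine maps send polyhedra to polyhedra, this is immediate. Thus the main work of the corollary is notational, and the result drops out directly from the definition of the reachable set together with the distributivity of image over union; there is no case split to manage as in Theorem~\ref{thm2} or the preceding theorem for ReLU layers.
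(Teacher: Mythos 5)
Your proposal is correct and follows essentially the same route as the paper's own proof: compute the affine image of each polyhedron $\mathcal{X}^{[\ell]}_s$ and then use the fact that the image of a union is the union of the images. Your added remark that affine maps preserve polyhedrality is a harmless (and sensible) extra sanity check that the paper leaves implicit.
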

\begin{proof}
	For an input $\mathbf{x} \in \mathcal{X}_s^{[\ell]}$, the linear relation $ \mathbf{y} =\mathbf{W}^{[\ell]}\mathbf{x}^{[\ell]}+\boldsymbol{\theta}^{[\ell]}$ implies that the output reachable set is 	$
	\mathcal{Y}_s^{[\ell]} = \left\{\mathbf{y} \mid \mathbf{y} =\mathbf{W}^{[\ell]}\mathbf{x}^{[\ell]}+\boldsymbol{\theta}^{[\ell]},~\mathbf{x} \in \mathcal{X}_s^{[\ell]}\right\}
	$. Moreover, because of $\mathcal{X}^{[\ell]} = \bigcup\nolimits_{s=1}^{N_{\ell}}\mathcal{X}_{s}^{[\ell]}$, it directly leads to $\mathcal{Y}^{[\ell]} = \bigcup\nolimits_{s=1}^{N_{\ell}}\mathcal{Y}_{s}^{[\ell]}$. 
\end{proof}

With the output reachable set computation results for both ReLU layers and linear layers, we are now ready to present the output reachable set computation results along with safety verification results summarized as functions \texttt{OutputReLU},  \texttt{OutputReLUNetwork} and \texttt{VeriReLUNetwork} presented in Algorithms \ref{algorithm_1}, \ref{algorithm_2} and \ref{algorithm_3}, respectively.

\begin{algorithm}
	\caption{Output Reach Set Computation for ReLU Layers} \label{algorithm_1}
	
	\begin{algorithmic}[1]
		\Require ReLU neural network weight matrix and bias $\langle\mathbf{W}^{[\ell]},~\boldsymbol{\theta}^{[\ell]}\rangle$, input set $\mathcal{X}^{[\ell]}= \bigcup_{s=1}^{N_{\ell}}\mathcal{X}_s^{[\ell]}$ with $\mathcal{X}^{[\ell]}_s \triangleq\{ \mathbf{x}^{[\ell]} \mid  \mathbf{A}_s^{[\ell]}\mathbf{x}^{[\ell]} \le \mathbf{b}_s^{[\ell]},~\mathbf{x} \in \mathbb{R}^{n^{[\ell]}}\} $.
		\Ensure Output reachable set $\mathcal{Y}^{[\ell]}$.
		
		\Function{OutputReLU}{$\langle\mathbf{W}^{[\ell]},~\boldsymbol{\theta}^{[\ell]}\rangle$,~$\mathcal{X}^{[\ell]}$}
		\For{$s=1:1:N_{\ell}$}
		\State $\mathcal{X}_s^{[\ell]+} \gets \left\{\mathbf{x}^{[\ell]} \mid \mathbf{A}_s^{[\ell]} \mathbf{x}^{[\ell]}\le \mathbf{b}_s^{[\ell]} \wedge \mathbf{W}^{[\ell]}\mathbf{x}^{[\ell]} > -\boldsymbol{\theta}^{[\ell]} \right\}$
		
		\State 	$\mathcal{Y}_s^{[\ell]+} \gets \left\{\mathbf{y} \mid \mathbf{y} =\mathbf{W}^{[\ell]}\mathbf{x}^{[\ell]}+\boldsymbol{\theta}^{[\ell]},~\mathbf{x} \in \mathcal{X}_s^{[\ell]+}\right\}$
		
		\State $\mathcal{X}_s^{[\ell]-} \gets \left\{\mathbf{x}^{[\ell]} \mid \left[ {\begin{array}{*{20}c}
			{\mathbf{A}_s^{[\ell]} }  \\
			{\mathbf{\mathbf{W}^{[\ell]}} }  \\
			\end{array} } \right]\mathbf{x}^{[\ell]} \le \left[ {\begin{array}{*{20}c}
			{\mathbf{b}_s^{[\ell]} }  \\
			\mathbf{-\boldsymbol{\theta}}^{[\ell]}  \\
			\end{array} } \right]\right\}$
		
		\If{$\mathcal{X}_s^{[\ell]-} \ne \emptyset$}
		\State $\mathcal{Y}_s^{[\ell]-} \gets \{\mathbf{0}\} $
		\Else
		\State $\mathcal{Y}_s^{[\ell]-} \gets \emptyset$
		\EndIf
		
		\For{$h=1:1:2^{n^{[\ell]}}-2$}
		\State $	\tilde{\mathbf{A}}_{h,s}^{[\ell]} \gets\left[ {\begin{array}{*{20}c}
			{\mathbf{A}_s^{[\ell]} }  \\
			{(\mathbf{I} - \mathbf{P}_h)\mathbf{W}^{[\ell]}}  \\
			{ - \mathbf{P}_h \mathbf{W}^{[\ell]}}  \\
			\end{array} } \right]$
		\State $\tilde{\mathbf{b}}_{h,s}^{[\ell]}\gets\left[ {\begin{array}{*{20}c}
			{\mathbf{b}_s^{[\ell]} }  \\
			( \mathbf{P}_h-\mathbf{I})\boldsymbol{\theta}^{[\ell]}  \\
			\mathbf{P}_h\boldsymbol{\theta}^{[\ell]}  \\	
			\end{array} } \right]$
		\State $\mathcal{X}_{h,s}^{[\ell]} \gets\left\{\mathbf{x}^{[\ell]} \mid \tilde{\mathbf{A}}_{h,s}^{[\ell]}\mathbf{x}^{[\ell]} \le \tilde{\mathbf{b}}_{h,s}^{[\ell]},~\mathbf{x}^{[\ell]} \in \mathbb{R}^{n^{[\ell]}}\right\} $
		\State  $\mathcal{Y}_{h,s}^{[\ell]} \gets \left\{\mathbf{y} \mid \mathbf{y} =\mathbf{W}^{[\ell]}\mathbf{x}^{[\ell]}+\boldsymbol{\theta}^{[\ell]},~\mathbf{x} \in \mathcal{X}_{h,s}^{[\ell]}\right\}$
		
		\EndFor
		\State $\mathcal{Y}^{[\ell]}_s \gets \mathcal{Y}^{[\ell]+}_s \cup \mathcal{Y}^{[\ell]-}_s \cup\left( \bigcup\nolimits_{h=1}^{2^{n^{[\ell]}}-2}\mathcal{Y}_{h,s}^{[\ell]}\right)$
		\EndFor	
		
		\State \Return $\mathcal{Y}^{[\ell]} \gets \bigcup\nolimits_{s=1}^{N_{\ell}}\mathcal{Y}^{[\ell]}_s$
		
		\EndFunction
	\end{algorithmic}
\end{algorithm}

\begin{algorithm}
	\caption{Output Reach Set Computation for ReLU Networks} \label{algorithm_2}
	
	\begin{algorithmic}[1]
		\Require ReLU neural network weight matrices and biases $\langle\mathbf{W}^{[\ell]},\boldsymbol{\theta}^{[\ell]}\rangle$, $,\ell = 1,\ldots,L$, input set $\mathcal{X}^{[0]}= \bigcup_{s=1}^{N_{0}}\mathcal{X}_s^{[0]}$ with $\mathcal{X}^{[0]}_s \triangleq\{ \mathbf{x}^{[0]} \mid  \mathbf{A}_s^{[0]}\mathbf{x}^{[0]} \le \mathbf{b}_s^{[0]},~\mathbf{x} \in \mathbb{R}^{n^{[0]}}\} $.
		\Ensure Output reachable set $\mathcal{Y}^{[L]}$.
		
		\Function{OutputReLUNetwork}{$\langle\mathbf{W}^{[\ell]},~\boldsymbol{\theta}^{[\ell]}\rangle$,~$\ell = 1,\ldots,L$,~$\mathcal{X}^{[0]}$}
		\State $\mathcal{X}^{[1]} \gets \mathcal{X}^{[0]}$
		\For{$\ell=1:1:L$}
		\If{Layer $\ell$ is a linear layer}
		
		\State $\mathcal{Y}_s^{[\ell]} \gets \left\{\mathbf{y} \mid \mathbf{y} =\mathbf{W}^{[\ell]}\mathbf{x}^{[\ell]}+\boldsymbol{\theta}^{[\ell]},~\mathbf{x} \in \mathcal{X}_s^{[\ell]}\right\}$
		
		\State 	$\mathcal{Y}^{[\ell]} \gets \bigcup\nolimits_{s=1}^{N_{\ell}}\mathcal{Y}_{s}^{[\ell]}$
		
		\ElsIf{Layer $\ell$ is a ReLU Layer}

		\State $\mathcal{Y}^{[\ell]} \gets\mathrm{OutputReLU}(\mathbf{W}^{[\ell]},\boldsymbol{\theta}^{[\ell]},\mathcal{X}^{[\ell]})$
		\EndIf
		
		\State $\mathcal{X}^{[\ell+1]} \gets \mathcal{Y}^{[\ell]}$
		
		\EndFor
		
		\State \Return $\mathcal{Y}^{[L]}$
		
		\EndFunction
	\end{algorithmic}
\end{algorithm}

\begin{algorithm}
	\caption{Safety Verification for ReLU Networks} \label{algorithm_3}
	
	\begin{algorithmic}[1]
		\Require ReLU neural network weight matrices weight matrices and biases $\langle\mathbf{W}^{[\ell]},\boldsymbol{\theta}^{[\ell]}\rangle$, $\ell = 1,\ldots,L$, input set $\mathcal{X}^{[0]}= \bigcup_{s=1}^{N_{0}}\mathcal{X}_s^{[0]}$ with $\mathcal{X}^{[0]}_s \triangleq\{ \mathbf{x}^{[0]} \mid  \mathbf{A}_s^{[0]}\mathbf{x}^{[0]} \le \mathbf{b}_s^{[0]},~\mathbf{x} \in \mathbb{R}^{n^{[0]}}\ $, safety specification $\mathcal{S} = \bigcup_{m =1}^{M}{\mathcal{S}_m}$ with $\mathcal{S}_m\triangleq\{\mathbf{y}^{[L]}\mid \mathbf{C}_m \mathbf{y}^{[L]} \le \mathbf{d}_m,~\mathbf{y} \in \mathbb{R}^{n^{[L]}}\}$.
		
		\Ensure SAFE or UNSAFE.
		
		\Function{VeriReLUNetwork}{$\langle\mathbf{W}^{[\ell]},~\boldsymbol{\theta}^{[\ell]}\rangle,~\ell=1,\ldots,L$,~$\mathcal{X}^{[0]}$,~$\mathcal{S}$}
		
		\State $\mathcal{Y}^{[L]} = \mathrm{OutputReLUNetwork}(\langle\mathbf{W}^{[\ell]},~\boldsymbol{\theta}^{[\ell]}\rangle,~\ell=1,\ldots,L,~\mathcal{X}^{[0]})$
		
		\If{$\mathcal{Y}^{[L]} \cap \neg \mathcal{S} = \emptyset$}
		\State \Return SAFE
		
		\Else
		
		\State \Return UNSAFE

		\EndIf	
		\EndFunction
	\end{algorithmic}
\end{algorithm}	

Some remarks are given below for those algorithms:  
\begin{enumerate}
	\item[(1)] Algorithm \ref{algorithm_1} is for the output reachable set computation for ReLU layers, named function \texttt{OutputReLU}, which is responsible to compute the output of each single layer equipped with ReLU activations. The input of function \texttt{OutputReLU} are the weight matrix and bias $\langle\mathbf{W}^{[\ell]},\boldsymbol{\theta}^{[\ell]}\rangle$, and layer input set $\mathcal{X}^{[\ell]}$. The output is the output reachable set $\mathcal{Y}^{[\ell]}$ of layer $\ell$.
	
	\item[(2)] Algorithm \ref{algorithm_2} is formulated based on function \texttt{OutputReLU} presented in Algorithm \ref{algorithm_1}. The function is called \texttt{OutputReLUNetwork}, whose input is the weight matrices and biases of neural network, $\langle\mathbf{W}^{[\ell]},~\boldsymbol{\theta}^{[\ell]}\rangle$,~$\ell = 1,\ldots,L$, and input set $\mathcal{X}^{[0]}$, and output is the output reachable set $\mathcal{Y}^{[L]}$ of the ReLU neural network. It is used for computing the output reachable set of a ReLU neural network. It should be noted that the linear layer is also allowed to exist in the neural network. Moreover, it worth mentioning that the output reachable set computed by function \texttt{OutputReLUNetwork} is the exact output reachable set, not an over-approximation.
	
	\item[(3)] Using the output reachable set computed in function \texttt{OutputReLUNetwork}, the safety verification for ReLU neural networks can be converted to check if there exists non-empty intersection between output reachable set and unsafe regions, which is summarized as function \texttt{VeriReLUNetwork}. The input of function \texttt{VeriReLUNetwork} is  weight matrices and biases $\langle\mathbf{W}^{[\ell]},~\boldsymbol{\theta}^{[\ell]}\rangle$,~$\ell = 1,\ldots,L$, input set $\mathcal{X}^{[0]}$ and safety specification $\mathcal{S}$. The output is to claim the neural network safe or unsafe.  Algorithm \ref{algorithm_3} is sound for the cases of SAFE and UNSAFE, that is, if it returns SAFE then the system is safe; when it returns UNSAFE there exists at least one output from input set is unsafe.
\end{enumerate}

The computation involved in the developed algorithms are operations of polyhedra, thus the computational complexity closely relates to the number of polyhedra generated and involved in the operations. For Algorithm \ref{algorithm_1}, the number of polyhedra contained in the output reachable set $\mathcal{Y}^{[\ell]}$ for the single layer $\ell$ is $ N_{\ell}2^{n^{[\ell]}}
$. Furthermore, considering a ReLU neural network with $L$ layers, the number of polyhedra for the $\ell$th layer is $ N_{0}\prod\nolimits_{s = 1}^{\ell} {2^{n^{[s]} } } $. It can be observed that the number of polyhedra increases exponentially as the number of layer grows, that also indicates that the neural networks with multiple layers have great ability to approximate complex functions and resolve complicate problems. The output of a ReLU network consisting of $L$ layers contains $ N_{0}\prod\nolimits_{s = 1}^{L} {2^{n^{[s]} } } $ polyhedra. The numbers for a single layer, the layer in a network and the whole neural network are listed in Table \ref{tab1}. These numbers represent the maximal numbers of polyhedra that may be obtained in the computation process, however in practice, the number is usually smaller than them since empty sets are produced for the output of the layers during the computation procedure. 

\begin{table}
	\centering
	\caption{Number of Polyhedra in Reachable Set Computation for ReLU Networks}\label{tab1}
	\begin{tabular}{|c|c|c|}
		\hline
		Single Layer $\ell$ & $\ell$th Layer & Neural Network\\
		\hline
		$ N_{\ell}2^{n^{[\ell]}}$ & $ N_{0}\prod\nolimits_{s = 1}^{\ell} {2^{n^{[s]} } } $ &  $ N_{0}\prod\nolimits_{s = 1}^{L} {2^{n^{[s]} } } $  \\
		\hline
		
	\end{tabular}
\end{table} 

In addition, it can be observed that the operations for computing $\mathcal{Y}_s^{[\ell]+}$, $\mathcal{Y}_s^{[\ell]-}$ and $\mathcal{Y}_{h,s}^{[\ell]}$ are all independent, thus those computations in Algorithm \ref{algorithm_1} can be executed in a parallel manner to enhance the scalability of the proposed approach. In the following example, it will be shown how the computational time will be reduced by employing parallel computing techniques.

\section{Numerical Example}
Consider a neural network with 3 inputs, 2 outputs and 7 hidden layers, see Figure \ref{NN3_7_2}. Each layer has 7 neurons, the weights and biases are all randomly generated. Hidden layers are chosen as ReLUs and the output layer is with linear functions. The example is
computed by using Matlab and Multi-parametric toolbox 3.0 \cite{herceg2013multi} on a personal computer with Windows 7, Intel Core i5-4200U,
1.6GHz, 4 GB RAM.
The first objective in this example is to compute the output reachable set of this ReLU neural network with respect to polytopic input sets.
\begin{figure}[h!]
	\begin{center}
		\includegraphics[width=12cm]{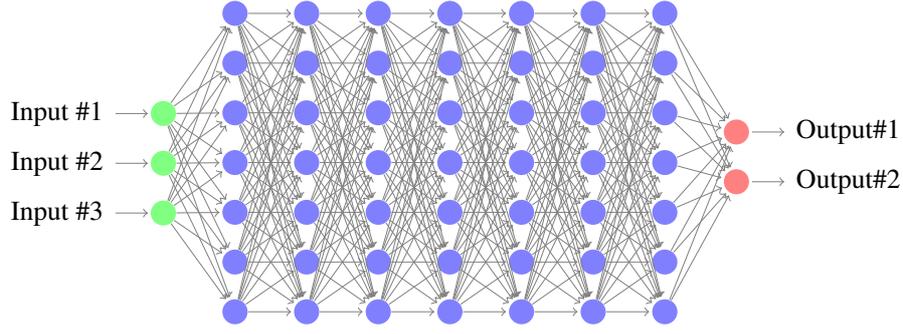}
		\caption{Neural network with 3 inputs, 2 outputs and 7 hidden layers of 7 neurons. The hidden layers are with ReLU activation functions and output layer is with linear functions. The weight matrices and bias vectors for the neural network in the example are randomly generated.}
		\label{NN3_7_2}
	\end{center}
\end{figure}

The input is assumed to be the $\mathcal{X}^{[0]} \triangleq \{\mathbf{x} \mid \left\|\mathbf{x}\right\|_{\infty}\le 1,\mathbf{x} \in \mathbb{R}^{3}\}$. Using function \texttt{OutputReLUNetwork} in Algorithm \ref{algorithm_2}, the computed reachable set can be obtained, which is a union of 1250 polyhedra, as shown in Figure \ref{fig:reachset}. Then, to validate the result, we discretize the input set $\mathcal{X}^{[0]}$ by step 0.05 and generate 8000 outputs in total for each discretizing point. As shown in Figure \ref{fig:output}, it can be observed that all the generated outputs are located in the computed reachable set to validate our approach.
\begin{figure}[h!]
	\begin{center}
		\includegraphics[width=12cm]{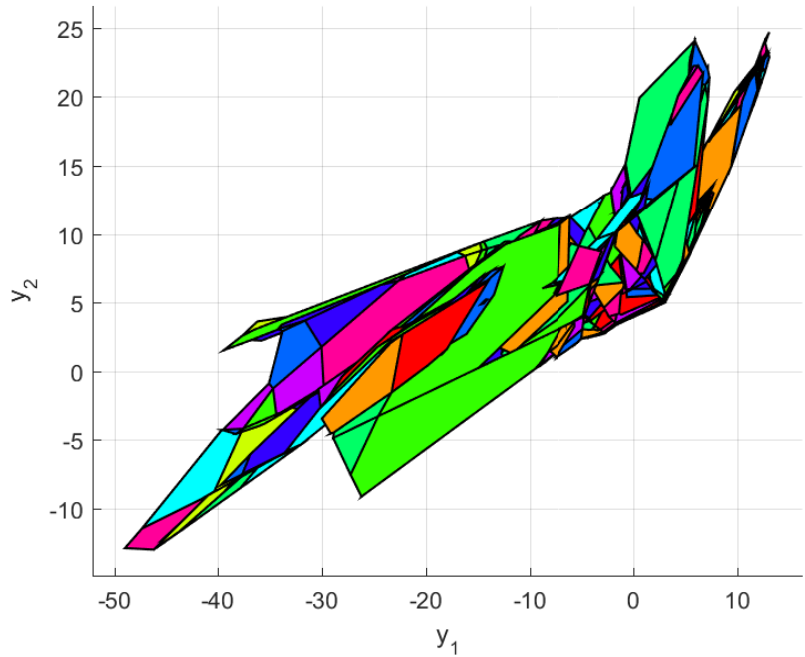}
		\caption{Given an input set $\mathcal{X}^{[0]} \triangleq \{\mathbf{x} \mid \left\|\mathbf{x}\right\|_{\infty}\le 1,\mathbf{x} \in \mathbb{R}^{3}\}$, the output reachable set of the proposed neural network is computed out and described by a union of 1250 polyhedra.}
		\label{fig:reachset}
	\end{center}
\end{figure}

\begin{figure}[h!]
	\begin{center}
		\includegraphics[width=12cm]{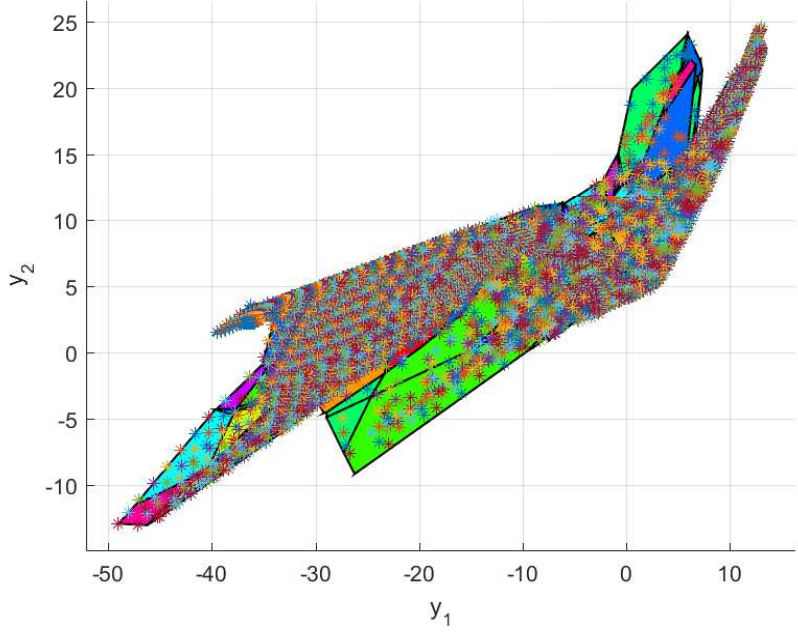}
		\caption{Discretizing the input set $\mathcal{X}^{[0]}$ by step 0.05, we generate 8000 outputs from those discretized points. The outputs are denoted by marker $*$, it can be seen that all the outputs are located in the computed reachable set.}
		\label{fig:output}
	\end{center}
\end{figure}

\begin{figure}[h!]
	\begin{center}
		\includegraphics[width=12cm]{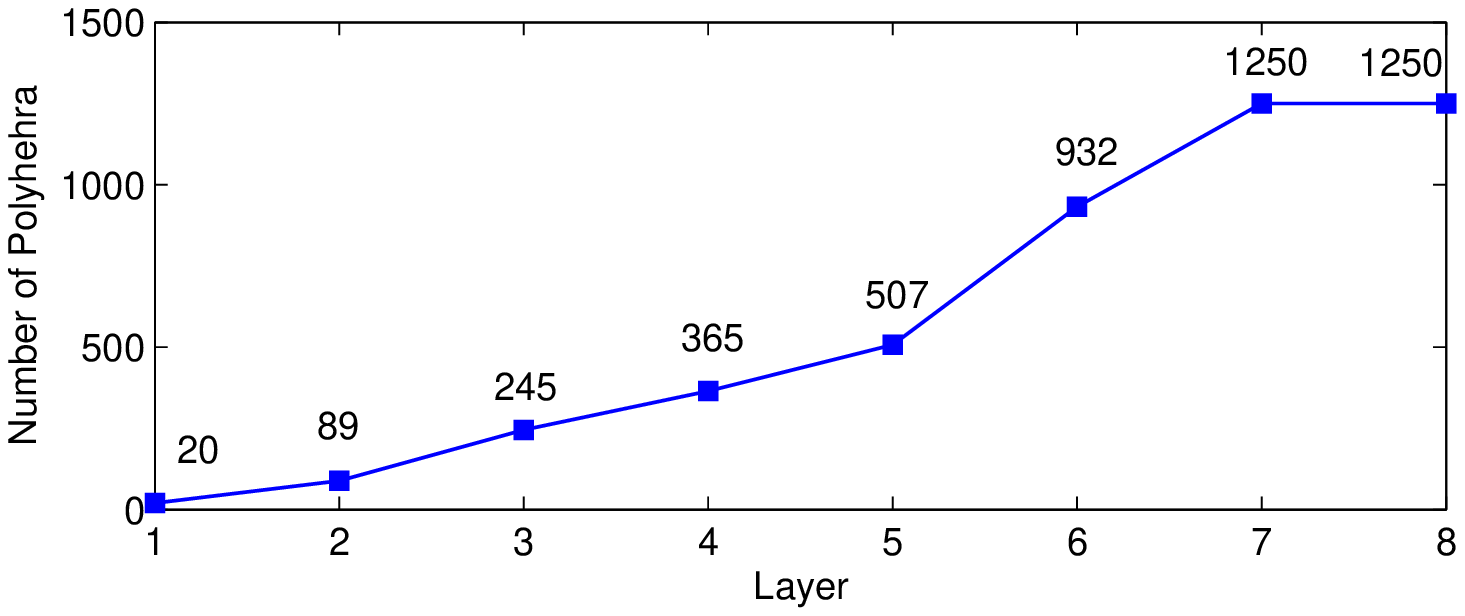}
		\caption{Number of polyhedra produced by each layer is increasing as the layer goes deeper. However, due to the existence of empty sets during the computation process, the number of polyhedra produced in each layer is actually much smaller than the maximal number as shown in Table \ref{tab1}. }
		\label{fig:number}
	\end{center}
\end{figure}

When run Algorithm \ref{algorithm_2}, the number of polyhedra grows from $20$ to $1250$ as the layer goes from $1$ to $8$, which is shown in Figure \ref{fig:number}. It is noted that the output layer is a linear layer so that the number of polyhedra remains unchanged for the output layer 8. Then, in order to reduce the computational time, Algorithm \ref{algorithm_2} is carried out in the framework of parallel computing with a 4 core CPU, and the computational time is reduced from  $1237.915301$ seconds to $704.881292$ seconds, which is shown in Table \ref{tab2}.  This observation can demonstrate that the parallel computing is able to enhance the scalability of the proposed algorithms.

\begin{figure}
	\begin{center}
		\includegraphics[width=12cm]{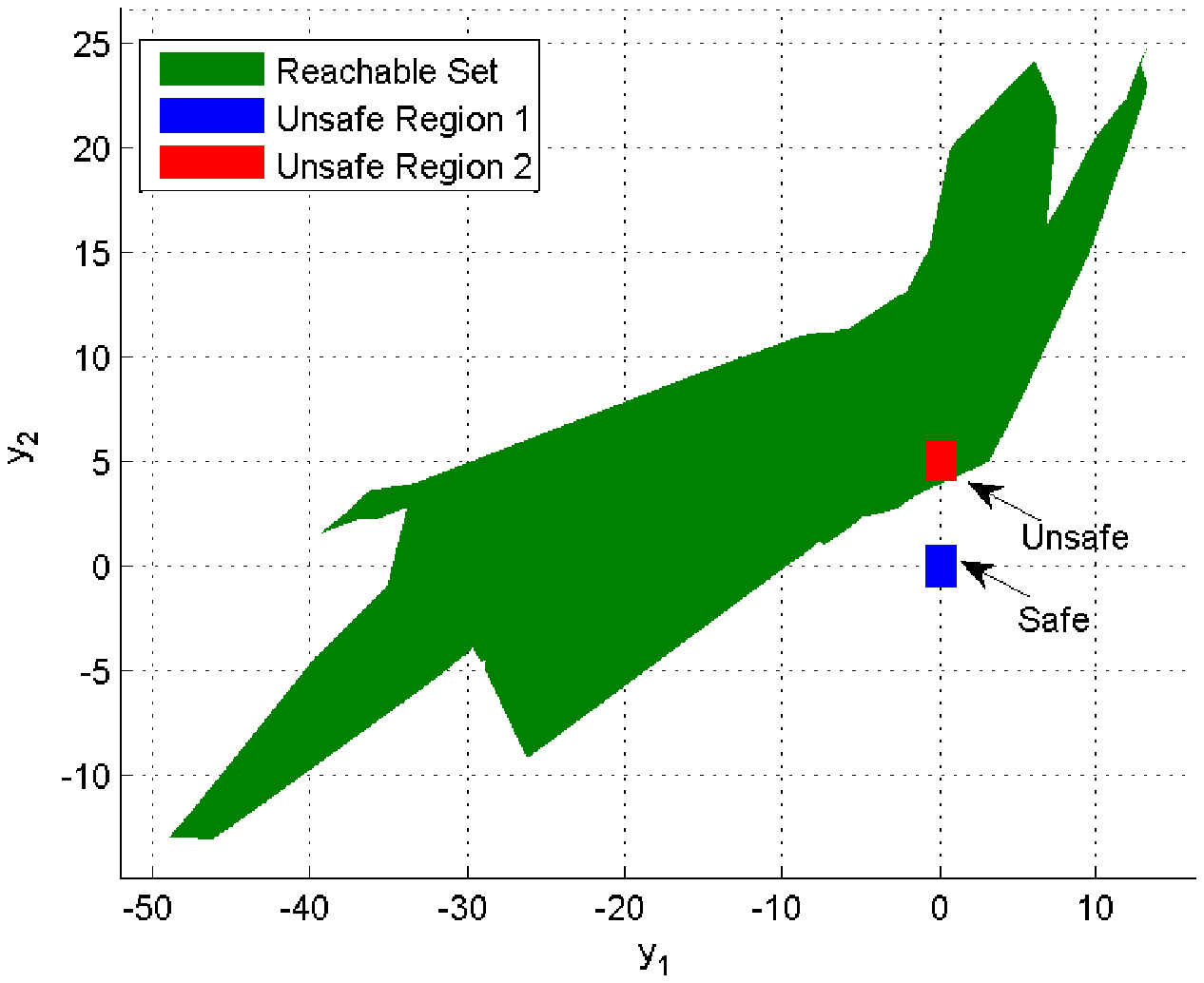}
		\caption{Safety verification regarding unsafe regions. The green area is the computed reachable set, and we consider two unsafe regions depicted in blue and red. (1) The neural network is safe for the blue Unsafe Region 1 since there is no intersection for reachable set and unsafe region; (2) For the red Unsafe Region 2, the neural network is unsafe due to the existence of nonempty intersection. }
		\label{fig:unsafe}
	\end{center}
\end{figure}

\begin{table}[h!]
	\centering
	\caption{Computational Time Comparison with Parallel Computing}\label{tab2}
	\begin{tabular}{|c|c|}
		\hline
		No Parallel Computing & Parallel Computing with 4 Core CPU \\
		\hline
		$1237.915301$ seconds & $704.881292$ seconds \\
		\hline
	\end{tabular}
\end{table}

With the output reachable set depicted in Figure \ref{fig:reachset}, the safety property can be easily checked via function \texttt{VeriReLUNetwork} presented in Algorithm \ref{algorithm_3}. Consider the unsafe region for the output $\mathbf{y}^{[L]}$ is described by  \
\begin{equation*}
\mathcal{S}_{\mathrm{unsafe} }= \{\mathbf{y}^{[L]} \mid \left\|\mathbf{y}^{[L]}-\mathbf{y}^{[L]}_0\right\|_{\infty} \le 1\}
\end{equation*} which can be expressed  in the hyperplane representation of 
\begin{align*}
\mathcal{S}_{\mathrm{unsafe}}=\{\mathbf{y}^{[L]} \mid \mathbf{C} \mathbf{y}^{[L]} \le \mathbf{d}\,~\mathbf{y} \in \mathbb{R}^{{n}^{[L]}}\}	\end{align*}
with $\mathbf{C}$ and $\mathbf{d}$ defined by 
\begin{align*}
\mathbf{C} = \left[ {\begin{array}{*{20}c}
	{1} & {0}  \\
	{-1} & {0}  \\
	{0} & {1}  \\
	{0} & {-1} \\
	\end{array} } \right],~
\mathbf{d} = \left[ {\begin{array}{*{20}c}
	{ 1 + \mathbf{y}_0^{[L]}(1)}   \\
	{1 - \mathbf{y}_0^{[L]}(2)}   \\
	{ 1 + \mathbf{y}_0^{[L]}(3)}   \\
	{1 - \mathbf{y}_0^{[L]}(4)}   \\
	\end{array} } \right]
\end{align*}

We can check the feasibility of linear programming problem formulated by the hyperplane representations of  $\mathcal{Y}^{[L]}$ and $\mathcal{S}_{\mathrm{unsafe}}$ to determine if there exists nonempty intersection. 
If we choose $\mathbf{y}_0 = [0,0]^{\top}$, the ReLU neural  network is safe since there is no intersection between $\mathcal{S}_{\mathrm{unsafe} }$ and $\mathcal{Y}^{[L]}$. However, if we select $\mathbf{y}_0 = [0,5]^{\top}$, the ReLU neural  network is unsafe since $\mathcal{S}_{\mathrm{unsafe} } \cap \mathcal{Y}^{[L]} \ne \emptyset $. Figure \ref{fig:unsafe} shows the safety verification results regarding the two unsafe regions.

\section{Conclusions and Future Remark}
This paper presents a novel output reachable set computation method for neural networks equipped with ReLU activations. With the aid of reachability analysis for ReLU functions, the output reachable set computation for ReLU layer can be made by a set of manipulations for polyhedra, and the output reachable set can be described by a union of a number of polyhedra if the input of the layer is given as a union of polyhedra. Since the involved computation steps are all in the form of operation for polyhedra, they can be efficiently implemented with the help of existing tools for computational geometry. Lastly, on the basis of computed output reachable set, the safety verification for ReLU neural networks can be  done by checking for the nonempty intersection of output reachable set and unsafe  region. 

With the reachable set computed in this paper, it absolutely has potentials to be further applied into many other problems involving ReLU neural networks such as safety-oriented neural network control synthesis problems, which will be our future study based on this paper. Beside the application of the results in this paper, another meaningful research direction is to further improve the scalability of the approach developed in this paper to deal with deeper neural networks. The key step will be finding out computation-efficient method to use as few polyhedra as possible to describe the output sets produced by hidden layers during the computation procedure. This will also be our future study for the reachable set computation problem of neural networks.

\bibliographystyle{ieeetr}
\bibliography{ref}

\end{document}